\documentclass{article}

\PassOptionsToPackage{numbers,sort&compress}{natbib}
\usepackage[preprint]{neurips_2026}

\usepackage[utf8]{inputenc}
\usepackage[T1]{fontenc}
\usepackage{hyperref}
\usepackage{url}
\usepackage{booktabs}
\usepackage{multirow}
\usepackage[above]{placeins}
\usepackage{amsfonts}
\usepackage{amssymb}
\usepackage{amsmath}
\usepackage{amsthm}
\usepackage{nicefrac}
\usepackage{microtype}
\usepackage{xcolor}
\usepackage{graphicx}
\usepackage{algorithm}
\usepackage{algpseudocode}
\usepackage{float}
\usepackage{wrapfig}
\usepackage{listings}
\usepackage{tcolorbox}
\tcbuselibrary{breakable,skins,listings}

\lstdefinestyle{systemprompt}{
  basicstyle=\ttfamily\scriptsize,
  breaklines=true,
  breakatwhitespace=false,
  columns=fullflexible,
  keepspaces=true,
  showstringspaces=false,
  frame=none,
}

\floatstyle{ruled}
\restylefloat{algorithm}

\algrenewcommand\algorithmicrequire{\textbf{Input:}}
\algrenewcommand\algorithmicensure{\textbf{Output:}}
\algrenewcommand\algorithmicreturn{\textbf{return}}
\algrenewcommand\alglinenumber[1]{#1:}

\title{MIRAGE: Mobile Agents with Implicit Reasoning and Generative World Models}

\author{%
\begin{tabular}{@{}c@{}}
\textbf{Zhichao Yang}$^{1}$ \quad
\textbf{Yuanze Hu}$^{1}$ \quad
\textbf{Haojie Hao}$^{1}$ \quad
\textbf{Longkun Hao}$^{1}$\\[5pt]
\textbf{Dongshuo Huang}$^{2}$ \quad
\textbf{Hongyu Lin}$^{3}$ \quad
\textbf{Gen Li}$^{1}$ \quad
\textbf{Lanqing Hong}$^{4}$\\[5pt]
\textbf{Yihang Lou}$^{5,\dagger}$ \quad
\textbf{Yan Bai}$^{5,\dagger}$\\[6pt]
{\normalfont $^{1}$Beihang University}\\
{\normalfont $^{2}$Northwestern Polytechnical University}\\
{\normalfont $^{3}$Institute of Software, Chinese Academy of Sciences}\\
{\normalfont $^{4}$National University of Singapore}\\
{\normalfont $^{5}$Peking University}\\[2pt]
{\normalfont\small $^{\dagger}$Corresponding authors.}
\end{tabular}%
}

\newcommand{\method}{MIRAGE}

\newcommand{\relgain}[1]{\,{\scriptsize\textcolor{green!50!black}{($\uparrow$#1\%)}}}
\newcommand{\relreduce}[1]{\,{\scriptsize\textcolor{green!50!black}{($\downarrow$#1\%)}}}
\newcommand{\relregress}[1]{\,{\scriptsize\textcolor{red!70!black}{($\uparrow$#1\%)}}}
\newtheorem{lemma}{Lemma}
\newtheorem{proposition}{Proposition}
\newtheorem{corollary}{Corollary}

\begin{document}

\maketitle

\begin{center}
\small Code: \url{https://github.com/yzc0912/MIRAGE}
\end{center}

\begin{abstract}
Mobile agents are increasingly expected to operate everyday applications from screenshots and language goals,
where reliable control requires reasoning over screen affordances, multi-step navigation,
and future state changes. Yet many agents externalize this computation as long textual
thoughts, making interaction slower, supervision more costly, and deployment less
efficient. We introduce \method{} (\textbf{M}obile agents with \textbf{I}mplicit
\textbf{R}easoning \textbf{A}nd \textbf{G}enerative world mod\textbf{E}ls), a
framework that learns continuous latent reasoning representations from visible textual
thoughts. \method{} introduces an efficient latent-space learning procedure that
transfers explicit reasoning into compact hidden states, allowing the agent to reason
internally without decoding long rationales. It further brings a world-model perspective
into mobile-agent training: the model's latent reasoning vectors are aligned with future
screenshots, encouraging the agent to predict upcoming interface states in latent space
before executing an action. This makes the hidden computation not only a compressed
thought trace, but also a forward-looking representation of how the environment may
change. At inference time, \method{} reasons in continuous latent space, reducing token
generation while improving execution efficiency. On AndroidWorld, \method{} matches
explicit-CoT SFT in the 4B ablation under a 3--5$\times$ lower decoded-token budget,
and improves a comparable instruction-tuned baseline by 10.2 points; on
AndroidControl, it improves action grounding with over 75\% fewer generated tokens.
\end{abstract}

\section{Introduction}

As vision-language models improve, more mobile-agent systems now use them to execute
mobile operations directly from screenshots and user instructions. Recent systems such
as UI-TARS, MAI-UI, OS-ATLAS, and SeeClick train VLMs to understand GUI screens and
produce taps, swipes, text entry, or navigation commands
\citep{qin2025ui,zhou2025mai,wu2024atlas,cheng2024seeclick}. Yet turning
a screen-level VLM into a reliable mobile agent still hinges on navigation reasoning:
the model must track task progress, decide which screen to visit next, and anticipate
how each action will change the interface. Figure~\ref{fig:intro} highlights this
reasoning bottleneck.
Current mobile agents often make this reasoning explicit through long thoughts or
verbose action traces, which increases decoding time, context usage, and supervision
cost. Their execution accuracy remains limited, suggesting that mobile agents need
reasoning that preserves explicit-trace capability while being cheaper to run under
realistic deployment constraints, where every extra token slows the interaction and
delays feedback during interactive control.

\begin{figure}[t]
  \centering
  \includegraphics[width=\linewidth]{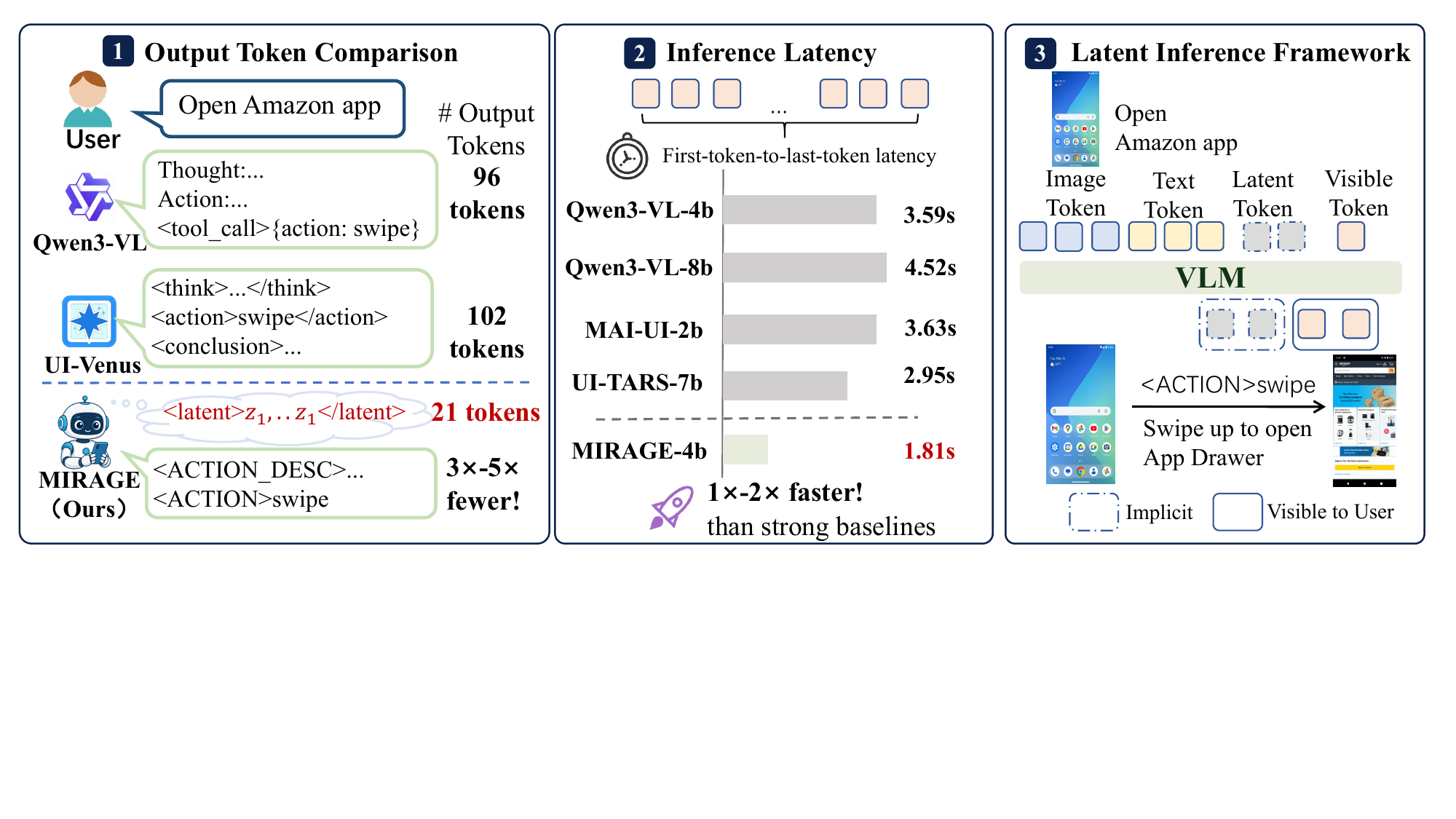}
  \caption{Inference-time comparison on a randomly sampled task and \method{} workflow:
  baselines emit verbose visible traces, while \method{} reasons with latent tokens and
  exposes concise actions.}
  \label{fig:intro}
\end{figure}

To address these issues, we propose \method{}, whose central idea is that a mobile agent should learn to
``think forward'' inside the model. Before emitting an action, the agent forms a
compact internal representation of what the current screen affords, why an action is
appropriate, and how the interface is likely to change afterwards. This reasoning is
carried out in latent space rather than decoded as visible text, so \method{} avoids
emitting intermediate thoughts and reduces both output tokens and
first-token-to-last-token latency. \method{} uses a two-stage training procedure. In the
first stage, it trains on explicit text traces so the backbone learns the mobile action
space and how to express observation, rationale, and future-screen prediction. In the
second stage, \method{} replaces the textual reasoning block with continuous latent
reasoning slots and continues training, gradually teaching the model to move reasoning
into latent space. At inference time, only action tokens are decoded; no rationale
text is emitted and the interaction latency is substantially reduced.

To transfer explicit reasoning capability into latent space, \method{} introduces
Approximate Parallel Latent Refinement~(APLR), which refines all latent slots in
parallel through $K$ Jacobi-style rounds, approximating full sequential latent
chain-of-thought: the first $K$ slots provably match the serial rollout while the
remaining tail error is bounded. To further enable the model to internalize future-state
prediction, \method{} attaches a lightweight Q-Former world-model head that aligns the
output latent representations with stop-gradient visual features of the next screenshot
from the frozen vision encoder. This alignment teaches the agent to anticipate GUI state
transitions, prevents latent representation collapse, and supplies dense supervision that
compensates for APLR's bounded tail error. Crucially, the action cross-entropy loss and
the next-frame feature-alignment loss are optimized jointly, so the same latent states
become both action-discriminative and transition-predictive. This coupling lets
\method{} retain CoT-level reasoning capacity while moving the intermediate computation
out of the decoded text stream, yielding comparable explicit-reasoning capability at
substantially lower inference cost.

\method{} yields strong results on established mobile-agent benchmarks. On AndroidWorld,
\method{} improves task success by 10.2 percentage points over a comparable
instruction-tuned baseline and matches explicit CoT in the 4B ablation under a
3--5$\times$ lower decoded-token cost. On
AndroidControl, it improves action-grounding accuracy while reducing generated tokens
by over 75\%, demonstrating that latent reasoning and future-state alignment preserve
explicit-reasoning capability with substantially leaner inference.

This paper makes three contributions:
\begin{itemize}
    \item We present \method{}, a mobile agent that reasons entirely in latent space,
      producing far fewer output tokens and substantially lower inference latency
      while matching explicit-CoT task performance in the 4B ablation under a
      3--5$\times$ lower decoded-token cost.
    \item We introduce APLR, a parallel Jacobi-style latent refinement procedure that
      approximates full serial latent chain-of-thought at a fraction of the training
      cost, with a provable bound on the approximation error of the tail slots.
    \item We introduce a Q-Former world-model head that aligns latent reasoning states
      with future-screenshot features in latent space, enabling the agent to predict
      upcoming GUI transitions and directly improving task capability.
\end{itemize}

\section{Related Work}

\paragraph{Mobile and GUI agents.}
GUI-agent benchmarks have evolved from web interaction to dynamic Android control, covering grounded shopping, open-web navigation, data-scaling studies, and online task completion \citep{yao2022webshop,deng2023mind2web,li2024effects,rawles2024androidworld}. Recent systems specialize VLMs for screenshot grounding, GUI actions, and mobile-device operation \citep{cheng2024seeclick,wu2024atlas,qin2025ui,wang2024mobile}. Instead of extending grounding or planning pipelines, \method{} trains internal agent states by replacing visible reasoning traces with latent slots predictive of the next GUI state during action decoding and transition modeling.

\paragraph{Reasoning in language and vision-language agents.}
Visible CoT and ReAct-style traces improve reasoning and acting but expose long rationales and consume context \citep{wei2022chain,yao2022react}. Other work internalizes computation through pause tokens, private thoughts, distillation, or continuous latent CoT \citep{goyal2023think,zelikman2024quiet,deng2023implicit,hao2022training}. \method{} brings implicit reasoning to mobile GUI control, where latent thoughts support action selection and transition understanding, using APLR to approximate serial latent refinement in a strictly causal triangular system rather than an equilibrium model.

\paragraph{World models and visual feature prediction.}
World models learn predictive dynamics for control, from compact latent simulators to latent-imagination agents \citep{ha2018world,hafner2019dream}. Joint-embedding vision objectives show that feature prediction can learn semantics without pixel generation \citep{assran2023self,bardes2024revisiting}, while modal-alignment work highlights bottlenecks between visual and language representations \citep{hu2025tinyalign}. Q-Former queries offer a lightweight cross-attention bottleneck \citep{li2023blip}. Recent GUI world models predict future screens, sketches, or semantic states \citep{luo2025vimo,cao2026mobiledreamer}. \method{} instead uses future prediction to shape latent reasoning states, encouraging action-induced transition representations without generating pixels or future text at inference time.

\section{Method}

\subsection{Problem Formulation: From Explicit-Thought to Latent-Thought Mobile Agents}

At interaction step $t$, a mobile GUI agent observes the current screenshot $x_t$, a user instruction $u$, and an interaction history $h_t=(a_{<t},x_{<t})$. It outputs an action $a_t$, such as tapping a coordinate, typing text, scrolling, or navigating back. We serialize actions as text tokens so that a VLM can model mobile control as conditional generation:
\begin{equation}
    o_t = (x_t,u,h_t),
    \qquad
    p_\theta(a_t \mid o_t).
    \label{eq:mobile_action}
\end{equation}

Mobile GUI agents face a critical trade-off between reasoning capability and
deployment efficiency. Explicit chain-of-thought improves multi-step decision
making because the model can verbalize observations, rationales, and predicted
state transitions before acting; however, this reasoning is paid for with visible
tokens. Many explicit-thought mobile agents first generate a readable reasoning
trace $\tau_t$ and then generate the action. This factorization can be written as
\begin{equation}
    p_\theta(\tau_t,a_t \mid o_t)
    =
    \prod_{i=1}^{|\tau_t|}
    p_\theta\!\left(\tau_t^{(i)} \mid \tau_t^{(<i)},o_t\right)
    \cdot
    \prod_{j=1}^{|a_t|}
    p_\theta\!\left(a_t^{(j)} \mid \tau_t,a_t^{(<j)},o_t\right).
    \label{eq:explicit_cot_agent}
\end{equation}
The intermediate thought $\tau_t$ is useful, but it is also a sequence of
discrete text tokens: it consumes context budget, increases decoding latency, and
usually requires human-written or synthetic rationale supervision. This makes
explicit CoT attractive as a training scaffold but costly as an inference-time
interface for mobile deployment.

To resolve this tension, \method{} formulates mobile control as a latent-thought
generation problem. Instead of removing reasoning, we internalize it: the model
keeps a structured computation budget, but the intermediate computation is carried
by hidden states rather than text. We introduce $N$ continuous latent variables
$\mathbf{z}_t=(z_{t,1},\ldots,z_{t,N})$, $z_{t,i}\in\mathbb{R}^d$, in the decoder context:
\begin{equation}
    p_\theta(a_t \mid o_t)
    =
    \int
    p_\theta(a_t \mid \mathbf{z}_t,o_t)
    q_\theta(\mathbf{z}_t \mid o_t)
    d\mathbf{z}_t .
    \label{eq:latent_cot_agent}
\end{equation}
Unlike explicit CoT, $\mathbf{z}_t$ does not live in the vocabulary and is never
detokenized. Its inference distribution $q_\theta$ is computed implicitly by
decoder hidden states, so latent-thought training does not need rationale-text
supervision. This formulation gives four practical advantages: it preserves
context by removing explicit rationale tokens, speeds up inference by shortening
the decoded sequence, retains multi-step reasoning capacity through latent slots,
and exposes the hidden reasoning state to auxiliary regularization such as
next-frame prediction. In this sense, latent thoughts bridge the gap between
CoT-style reasoning strength and efficient mobile-agent deployment, motivating the
Approximate Parallel Latent Refinement(APLR) refinement and world-model regularization developed next.

\begin{figure}[t]
  \centering
  \includegraphics[width=\linewidth]{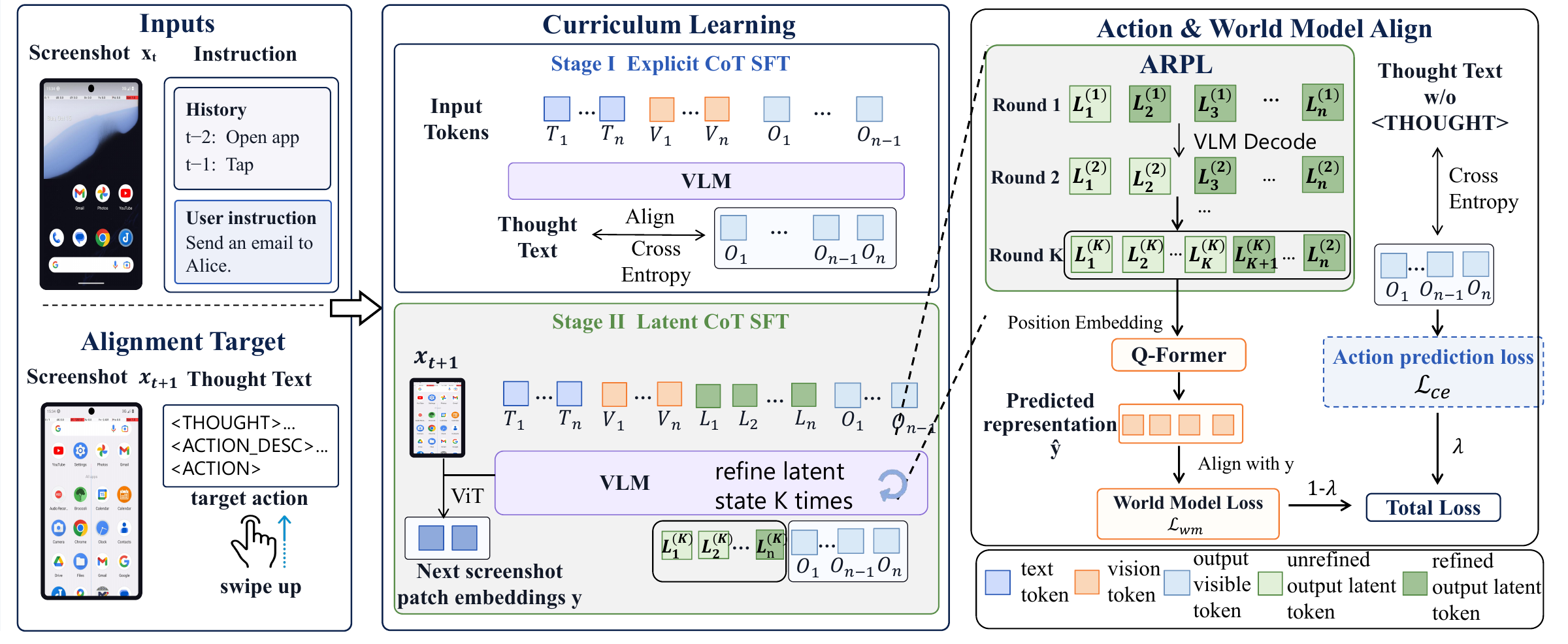}
  \caption{\method{} pipeline. Stage~1 learns explicit mobile thoughts and action formatting. Stage~2 replaces the thought text with latent slots, refines them with APLR, and trains a Q-Former world model to align latent states with next-frame visual features.}
  \label{fig:pipeline}
\end{figure}

\subsection{Training Latent Chain-of-Thought via Approximate Parallel Latent Refinement}

\subsubsection{Structured Thought Data and Latent Insertion}

Our training data uses a structured thought format with three mobile-agent reasoning dimensions:
\begin{quote}
\small\ttfamily\raggedright
<THOUGHT> [observation] [rationale] [predict] <ACTION\_DESC> ... <ACTION> ...
\end{quote}
The \texttt{observation} field describes the visible screen state, \texttt{rationale} explains why a particular operation should be taken, and \texttt{predict} describes the expected next-screen transition. Explicit-thought warmup uses this structured text as ordinary next-token supervision. Latent-thought training then replaces the entire \texttt{<THOUGHT>} block with $N$ latent tokens.
The token sequence becomes
\begin{equation}
    [\mathrm{ctx}]\ ;
    [\mathrm{start}]\ ;
    \underbrace{\langle\mathrm{lat}\rangle,\ldots,\langle\mathrm{lat}\rangle}_{N\ \mathrm{slots}}\ ;
    [\mathrm{end}]\ ;
    \langle\mathrm{ACTION\_DESC}\rangle \cdots
    \langle\mathrm{ACTION}\rangle .
    \label{eq:latent_sequence}
\end{equation}
The latent slots occupy positions $p_1<\cdots<p_N$ and share the learned initializer $e_{\mathrm{lat}}$.

\subsubsection{From Serial Masking to Approximate Parallel Latent Refinement}
\label{sec:aplr}

Original Coconut-style latent reasoning \citep{hao2022training} replaces visible reasoning steps with continuous hidden states and computes those states serially; Appendix~\ref{app:original_coconut} gives the full formalization. Our APLR keeps the same latent-thinking motivation but changes the execution pattern: rather than advancing through thought steps one by one, it treats the thought block as $N$ slots and refines all slots in parallel rounds.

With APLR, latent reasoning becomes a small number of synchronous refinement passes
instead of a long serial rollout. Conceptually, this lets the model keep an internal
multi-step computation budget while making training cost depend on the refinement
depth $K$, not directly on the latent length $N$.

With fixed non-latent context $c=o_t$ and latent positions $p_1<\cdots<p_N$, the serial target states are
\begin{equation}
    s_i
    =
    G_i(s_1,\ldots,s_{i-1};c),
    \qquad i=1,\ldots,N ,
    \label{eq:serial_coconut}
\end{equation}
where $G_i$ is the hidden state at position $p_i-1$ when the previous latent states have already been written into the sequence. This is a forward-substitution procedure over a causal triangular system. It is accurate, but expensive: fully refining all $N$ latent tokens requires one full decoder pass per latent update, plus a final pass for action logits.

We use APLR to eliminate the need for $N$ serial forward passes in traditional latent CoT training. The key observation is that causal attention makes the latent system strictly triangular: $z_i$ cannot depend on itself or on any future $z_{j>i}$. Therefore, instead of filling the slots one by one, APLR performs Jacobi-style rounds:
\begin{equation}
    z_i^{(k+1)}
    =
    G_i\!\left(z_1^{(k)},z_2^{(k)},\ldots,z_{i-1}^{(k)};c\right),
    \qquad
    i=1,\ldots,N,\quad k=0,\ldots,K-1 .
    \label{eq:aplr_update}
\end{equation}
All right-hand sides use old round-$k$ values, so one full forward updates all slots. In practice, we use a small refinement budget, $K=3$ by default.

APLR has a precise relationship to the serial refinement in \eqref{eq:serial_coconut}. After $K$ parallel rounds, the first $K$ latent slots exactly match the serial solution, $z_i^{(K)}=s_i$ for $i\leq K$. Compared with refining all latent tokens one by one, the unrefined tail slots retain a structured residual
\begin{equation}
    \delta^{(K)} \approx A^K\delta^{(0)},
    \label{eq:aplr_tail_error}
\end{equation}
where $A$ is a strictly lower-triangular Jacobian. Appendix~\ref{app:aplr_proof} proves that after $K$ APLR rounds, the first $K$ latent slots exactly recover the serial solution, and then derives the tail-error form in \eqref{eq:aplr_tail_error}. In implementation, early bootstrap passes may run without gradient, while the first pass can retain gradient through $e_{\mathrm{lat}}$ for stability. Before the final gradient-enabled pass, we rebuild the input embeddings with a mask: latent positions use detached bootstrap values, while non-latent positions retain their original gradient lineage to the vision tower and token embeddings.

The final APLR pass is trained with standard next-token CE over serialized action tokens:
\begin{equation}
    \mathcal{L}_{\mathrm{ce}}
    =
    -\sum_{j=1}^{|a_t|}
    \log
    p_\theta\!\left(
    a_t^{(j)}
    \mid
    \mathbf{z}_t^{(K)},a_t^{(<j)},o_t
    \right).
    \label{eq:action_ce}
\end{equation}
When $K\ll N$, the tail latents $z_{K+1:N}^{(K)}$ retain approximation error relative to a trainer that refines every latent token serially. In our structured thought format, the \texttt{predict} dimension concerns how the screen will change after the action and typically lies late in the thought sequence. Action CE only penalizes tail-error directions that affect the next action tokens; it does not directly supervise errors that matter for environment prediction. This motivates the world-model objective.

\subsection{World Model: Q-Former Next-Frame Alignment}

The \texttt{predict} field describes future screen semantics after a mobile action. Unlike GUI world models that generate pixels or sketches~\citep{luo2025vimo}, our Q-Former head directly regularizes latent reasoning states to predict future GUI features, forcing latent slots to encode environment dynamics rather than only current-screen action correlations. Tail-error directions that harm future-screen prediction therefore receive direct gradients; Appendix~\ref{app:wm_tail_error} formalizes this with a local second-order argument.

Because the training records are sequential, most non-terminal step-$t$ examples provide the next screenshot $x_{t+1}$. We use it as feature supervision, not as a pixel-generation target: $x_{t+1}$ is passed through the VLM's own frozen vision tower, yielding a stable next-frame feature target that does not drift with the world-model head.

We gather the final-pass hidden states at latent positions to obtain latent CoT hidden states
\begin{equation}
    C_t = \mathrm{Gather}\!\left(H_t^{(K)},p_1,\ldots,p_N\right)
    \in \mathbb{R}^{L\times H},
    \label{eq:cot_memory}
\end{equation}
where each row is one latent thought vector after APLR. The next screenshot is first converted by the VLM image processor into visual patches. After the VLM's spatial merge operation, each target patch has both a feature vector and a 2D grid coordinate. We write the target features as
\begin{equation}
    V_{t+1}
    =
    \mathrm{sg}\!\left(f_{\mathrm{vis}}(x_{t+1})\right)
    =
    [v_1,\ldots,v_M]
    \in \mathbb{R}^{M\times H_v},
    \label{eq:next_feature_target}
\end{equation}
where $\mathrm{sg}(\cdot)$ denotes stop-gradient. The stop-gradient prevents collapse through the target branch, no image decoder is required, and the target lives on the same visual manifold as the backbone features.

For each next-frame patch $j$ with grid coordinate $(r_j,c_j)$, we build a learnable spatial query
\begin{equation}
    q_j = e^{\mathrm{row}}_{r_j} + e^{\mathrm{col}}_{c_j}
    \in \mathbb{R}^{d_q}.
    \label{eq:qformer_query}
\end{equation}
This says, in simple terms, ``predict the feature at row $r_j$ and column $c_j$ of the next screen.'' The separable row/column design gives a stable 2D positional prior even when mobile screenshots have different resolutions and therefore different patch grids.

The BLIP-2-style Q-Former aligner \citep{li2023blip} self-attends over queries, cross-attends to $C_t$, and projects each query output into the VLM feature space:
\begin{equation}
    U_t = \mathrm{QFormer}_{\phi}(Q_t,C_t),
    \qquad
    \hat{V}_{t+1}=W_{\mathrm{vis}}U_t .
    \label{eq:qformer_forward}
\end{equation}
The output $\hat{V}_{t+1}=[\hat{v}_1,\ldots,\hat{v}_M]$ is therefore a predicted next-frame representation, one vector per next-frame patch. The default objective is masked per-patch cosine distance:
\begin{equation}
    \mathcal{L}_{\mathrm{wm}}
    =
    \frac{1}{|\mathcal{M}|}
    \sum_{(b,j)\in\mathcal{M}}
    \left(
    1-
    \frac{
    \langle \hat{v}_{b,j},v_{b,j}\rangle
    }{
    \|\hat{v}_{b,j}\|_2\|v_{b,j}\|_2
    }
    \right).
    \label{eq:wm_cosine}
\end{equation}
The mask $\mathcal{M}$ keeps only valid patches from non-terminal latent samples; cosine is default, with MSE used for ablations.

\subsection{Two-Stage Training Pipeline}

We use a curriculum learning strategy to make latent reasoning learnable rather
than asking the model to discover a continuous thought space from scratch. Stage~1
warm-ups the model on explicit structured thoughts to learn action formatting and
observation--rationale--prediction reasoning patterns. Specifically, the full
\texttt{<THOUGHT>} trace is exposed as text supervision and the VLM is trained with
standard next-token CE over the structured thought, action description, and action
tokens. Stage~2 distills this explicit reasoning process into compact latent slots:
the \texttt{<THOUGHT>} block is replaced by the latent sequence in
\eqref{eq:latent_sequence}, APLR refines the latent states ($K=3$ by default), the
collator loads next screenshots, and the world model regularizes the latent slots
through Q-Former next-frame feature alignment. The joint objective is
\begin{equation}
    \mathcal{L}
    =
    \lambda \mathcal{L}_{\mathrm{ce}}
    +
    (1-\lambda)\mathcal{L}_{\mathrm{wm}},
    \qquad
    \lambda \in (0,1).
    \label{eq:joint_loss}
\end{equation}
We use $\lambda=0.8$ by default. Figure~\ref{fig:pipeline} summarizes the pipeline, and Appendix~\ref{app:pipeline_algorithm} gives pseudocode. At inference time, the agent only performs latent substitution and greedy action decoding; the Q-Former head is used only for training-time representation shaping \citep{yuan2026fast}.

\section{Experiments}

\subsection{Setup}

\paragraph{Backbones.}
We fine-tune two vision-language models from the Qwen3-VL family~\citep{bai2025qwen3}:
\textbf{Qwen3-VL-4B-Instruct} (4B parameters) and \textbf{Qwen3-VL-8B-Instruct} (8B parameters).
Both backbones share the same action-serialization vocabulary, latent-slot token format,
and Q-Former world-model head. Their latent computation budgets differ: the main
Qwen3-VL-4B setting uses 9 latent slots with 3 APLR refinement passes, while the
Qwen3-VL-8B setting uses 6 latent slots with 3 refinement passes.

\paragraph{Evaluation benchmarks.}
We evaluate on two standard mobile-agent benchmarks.
\textbf{AndroidControl}~\citep{li2024effects} provides paired high-level and
low-level instructions with ground-truth action sequences, allowing separate measurement of
instruction-following exact match (EM) and action accuracy.
\textbf{AndroidWorld}~\citep{rawles2024androidworld} is a dynamic, on-device benchmark
spanning 116 real-world task instances across 20 Android apps, measuring end-to-end
task completion rate under live Android dynamics.

\paragraph{Baselines.}
We compare against size-matched \textbf{Qwen3-VL-4B/8B-Instruct} backbones
\citep{bai2025qwen3}, a general multimodal baseline \textbf{GPT-4o}
\citep{hurst2024gpt}, reinforcement-tuned GUI agents \textbf{GUI-R1}/\textbf{UI-R1}
\citep{luo2025gui,lu2026ui}, and recent GUI-agent systems including
\textbf{ShowUI}, \textbf{MAI-UI}, \textbf{UI-Venus-Navi}, \textbf{UI-TARS-7B-SFT},
and \textbf{Ferret-UI Lite} across AndroidControl and AndroidWorld
\citep{lin2025showui,zhou2025mai,gu2025ui,qin2025ui,yang2025ferret}.

\subsection{Main Results}

\begin{table}[t]
  \caption{AndroidControl results. EM = exact match, Action Acc. = action
  accuracy, and Tokens = average generated tokens per step. External rows use
  reported Type/EM metrics; green parentheses show \method{} gains over
  size-matched Qwen3-VL-Instruct baselines. Best primary metrics and lowest
  Tokens are in \textbf{bold}.}
  \label{tab:androidcontrol}
  \centering
  \small
  \setlength{\tabcolsep}{3.5pt}
  \resizebox{\linewidth}{!}{%
  \begin{tabular}{@{}l c ccc ccc@{}}
    \toprule
    \multirow{2}{*}{\textbf{Model}} & \multirow{2}{*}{\textbf{Size}}
      & \multicolumn{3}{c}{\textbf{Low-Level}}
      & \multicolumn{3}{c}{\textbf{High-Level}} \\
    \cmidrule(lr){3-5}\cmidrule(lr){6-8}
      & & EM & Action Acc. & Tokens & EM & Action Acc. & Tokens \\
    \midrule
    GPT-4o & -- & 66.3 & 20.8 & -- & 74.3 & 19.4 & -- \\
    \midrule
    GUI-R1-3B & 3B & 58.0 & 46.6 & -- & 83.7 & 64.4 & -- \\
    UI-R1-3B & 3B & 57.9 & 45.4 & -- & 72.5 & 57.4 & -- \\
    GUI-R1-7B & 7B & 71.6 & 51.7 & -- & 85.2 & 66.5 & -- \\
    ShowUI-2B & 2B & 74.83 & 81.68 & 31.57 & 59.82 & 74.18 & 34.39 \\
    MAI-UI-2B & 2B & 74.17 & 64.70 & 32.87 & 67.3 & 64.30 & 32.78 \\
    UI-Venus-Navi-7B & 7B & \textbf{85.09} & 93.05 & 103.66 & \textbf{75.2} & \textbf{86.14} & 96.36 \\
    UI-TARS-7B-SFT & 7B & 76.28 & 82.16 & 91.80 & 62.12 & 74.37 & 91.38 \\
    \midrule
    Qwen3-VL-4B-Instruct & 4B & 68.48 & 75.15 & 115.67 & 54.58 & 69.39 & 116.00 \\
    Qwen3-VL-8B-Instruct & 8B & 77.66 & 82.54 & 79.86 & 60.78 & 74.66 & 80.52 \\
    \midrule
    \method{}-4B & 4B & 77.59\relgain{13.30} & 91.09\relgain{21.21} & 18.92\relreduce{83.64} & 64.43\relgain{18.05} & 75.67\relgain{9.05} & 20.61\relreduce{82.23} \\
    \method{}-8B & 8B & 83.75\relgain{7.84} & \textbf{94.62}\relgain{14.64} & \textbf{18.01}\relreduce{77.45} & 72.35\relgain{19.04} & 82.83\relgain{10.94} & \textbf{19.89}\relreduce{75.30} \\
    \bottomrule
  \end{tabular}%
  }
\end{table}

\begin{table}[t]
  \caption{AndroidWorld results. SR = task success rate; Avg.\ Steps/Tokens are
  per task. Green parentheses show \method{} gains over size-matched
  Qwen3-VL-Instruct baselines; red marks higher step count. Best SR and lowest
  averages are in \textbf{bold}.}
  \label{tab:androidworld}
  \centering
  \small
  \begin{tabular*}{\linewidth}{@{\extracolsep{\fill}}l c c c c}
    \toprule
    \textbf{Model} & \textbf{Size} & \textbf{SR} & \textbf{Avg.\ Steps} & \textbf{Avg.\ Tokens} \\
    \midrule
    GPT-4o & -- & 30.6 & -- & -- \\
    \midrule
    Ferret-UI Lite & 3B & 28.0 & -- & -- \\
    UI-TARS & 7B & 33.0 & -- & -- \\
    ShowUI-2B & 2B & 7.8 & 18.9 & 33.0 \\
    MAI-UI-2B & 2B & 46.5 & 15.0 & 116.0 \\
    UI-Venus-Navi-7B & 7B & 42.2 & 13.3 & 92.0 \\
    UI-TARS-7B-SFT & 7B & 33.6 & 17.3 & 104.0 \\
    \midrule
    Qwen3-VL-4B-Instruct & 4B & 42.9 & 14.3 & 103.0 \\
    Qwen3-VL-8B-Instruct & 8B & 47.6 & \textbf{12.6} & 108.0 \\
    \midrule
    \method{}-4B & 4B & 52.6\relgain{22.6} & 14.2\relreduce{0.7} & 31.0\relreduce{69.9} \\
    \method{}-8B & 8B & \textbf{57.8}\relgain{21.3} & 13.7\relregress{8.7} & \textbf{27.0}\relreduce{75.0} \\
    \bottomrule
  \end{tabular*}
\end{table}

\paragraph{AndroidControl.}
Table~\ref{tab:androidcontrol} shows that \method{} improves the size-matched
Qwen3-VL-Instruct baselines while producing much shorter action outputs. On the
low-level split, \method{}-4B improves EM from 68.48 to 77.59 and action accuracy from
75.15 to 91.09, while reducing the average generation length from 115.67 to
18.92 tokens. \method{}-8B shows the same pattern, increasing low-level EM from
77.66 to 83.75 and action accuracy from 82.54 to 94.62 with 18.01 tokens per step.
On the high-level split, \method{}-4B improves EM/action accuracy by 9.85/6.28
percentage points over Qwen3-VL-4B-Instruct, and \method{}-8B improves them by
11.57/8.17 points over Qwen3-VL-8B-Instruct. These gains suggest that replacing
verbose visible thoughts with latent reasoning can improve grounding and action
selection without increasing the decoded token budget.

\paragraph{AndroidWorld.}
Table~\ref{tab:androidworld} evaluates the same models in a dynamic on-device
setting. \method{} raises AndroidWorld SR from 42.9 to 52.6 for 4B and from
47.6 to 57.8 for 8B, while reducing average tokens from 103.0/108.0 to
31.0/27.0. The 8B model uses slightly more steps than its Qwen3-VL baseline,
so the gain does not simply come from shorter trajectories. Among retained
specialized GUI agents, \method{}-8B gives the highest AndroidWorld SR with far
fewer generated tokens. Together, the two benchmarks show that latent reasoning
and world-model training improve task effectiveness while keeping outputs compact.

Figure~\ref{fig:latency_and_breakdown} further analyzes efficiency and robustness.
The left panel measures first-to-last-token latency; \method{}-4B is fastest,
consistent with the token reductions in Tables~\ref{tab:androidcontrol}
and~\ref{tab:androidworld}. The right panel breaks AndroidControl low-level results
into IDD, app-unseen, category-unseen, and task-unseen subsplits. After correcting
each model to its reported all-split score, \method{} remains strong across subsplits,
especially on action accuracy, suggesting that Table~\ref{tab:androidcontrol} is not
driven by a single easy split.

\begin{figure}[t]
  \centering
  \begin{minipage}[t]{0.46\linewidth}
    \centering
    \includegraphics[width=\linewidth]{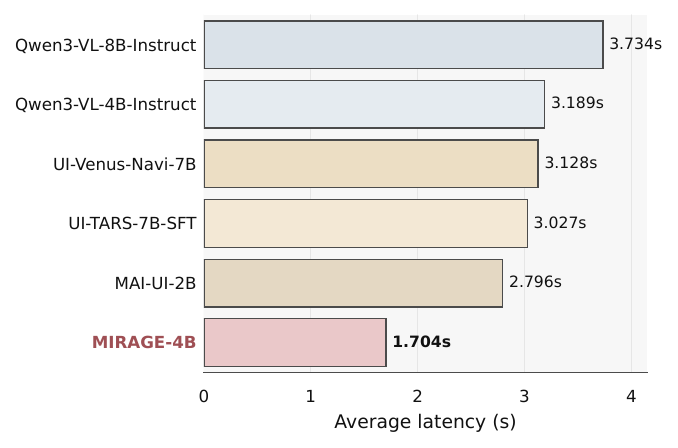}
  \end{minipage}\hfill
  \begin{minipage}[t]{0.50\linewidth}
    \centering
    \includegraphics[width=\linewidth]{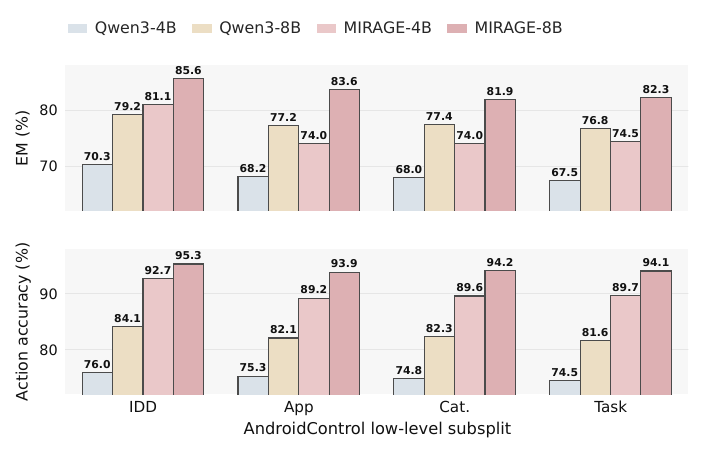}
  \end{minipage}
  \caption{\textbf{Left:} Average latency from the first generated token to the final
  generated token. \method{}-4B produces the shortest decoded sequence latency among
  the compared models. \textbf{Right:} AndroidControl low-level subsplit EM and
  action accuracy, corrected by subtracting each model's offset between the raw low-level
  subsplit average and the reported low-level all-split score.}
  \label{fig:latency_and_breakdown}
\end{figure}

\subsection{Ablation Study}

We ablate the three core components of \method{}---latent CoT slots, APLR parallel
refinement, and the Q-Former world-model head---using Qwen3-VL-4B-Instruct on
AndroidWorld.

\paragraph{Ablation analysis.}
\begin{wrapfigure}[14]{r}{0.50\linewidth}
  \vspace{-4mm}
  \centering
  \includegraphics[width=\linewidth]{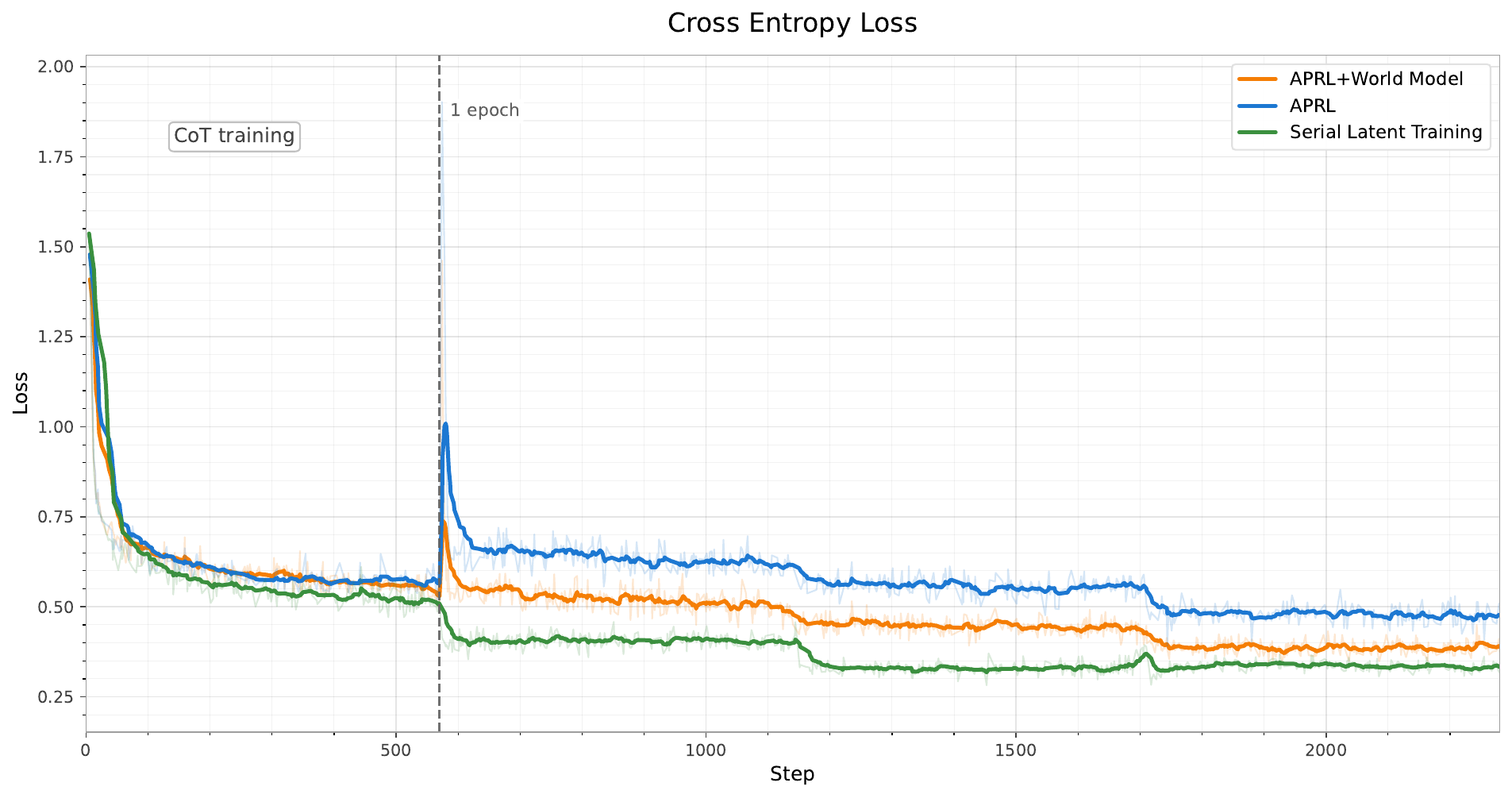}
  \vspace{-3mm}
  \caption{Cross-entropy training loss for Qwen3-VL-4B variants matched to Table~\ref{tab:ablation_component}.}
  \label{fig:ablation_loss}
  \vspace{-4mm}
\end{wrapfigure}
Figure~\ref{fig:ablation_loss} reports cross-entropy loss for the same Qwen3-VL-4B
training variants summarized in Table~\ref{tab:ablation_component}. For the latent
settings, we allocate 9 latent slots; in the serial latent-CoT reference, the
observation, rationale, and prediction fields each correspond to three latent slots
and are masked through a stepwise curriculum. The sharp change after the first epoch
is expected rather than a failure mode: training switches from explicit-thought warmup
to latent-CoT training at that point, and the learning rate is reset with the new
stage. After this transition, APLR without the world-model objective exhibits a higher
loss than the serial latent-CoT reference, consistent with the fact that parallel
approximation leaves tail latent states less directly supervised by action CE alone.
Adding the Q-Former world-model objective makes the APLR curve track the serial
latent-CoT trend more closely, suggesting that next-frame feature alignment supplies
useful gradients for the transition-predictive parts of the latent state.

\begin{wraptable}[12]{r}{0.50\linewidth}
  \vspace{-3mm}
  \caption{Component ablation on AndroidWorld (Qwen3-VL-4B, SR \%).}
  \label{tab:ablation_component}
  \centering
  \scriptsize
  \setlength{\tabcolsep}{2.2pt}
  \renewcommand{\arraystretch}{0.95}
  \resizebox{\linewidth}{!}{%
  \begin{tabular}{l c c c c}
    \toprule
    \textbf{Variant} & \textbf{Lat.} & \textbf{APLR} & \textbf{WM}
      & \textbf{SR} \\
    \midrule
    Base model        & \texttimes & \texttimes & \texttimes & 42.9 \\
    Action-only SFT   & \texttimes & \texttimes & \texttimes & 31.0 \\
    Explicit CoT SFT  & \texttimes & \texttimes & \texttimes & \textbf{52.6} \\
    Latent CoT, serial & \checkmark & \texttimes & \texttimes & 50.9 \\
    APLR only         & \checkmark & \checkmark & \texttimes & 48.2 \\
    \method{}-4B      & \checkmark & \checkmark & \checkmark & \textbf{52.6} \\
    \bottomrule
  \end{tabular}}
  \vspace{-4mm}
\end{wraptable}
Table~\ref{tab:ablation_component} shows the same pattern in final AndroidWorld SR.
Action-only SFT performs worse than the base model, indicating that removing thought
supervision without replacing it by internal computation can harm interactive
decision making. Explicit CoT SFT and full \method{}-4B both reach 52.6 SR, showing
that latent reasoning can match explicit CoT while avoiding decoded rationale tokens.
Serial latent CoT preserves much of this benefit (50.9 SR), and APLR without the
world-model objective reaches 48.2; adding the Q-Former world-model objective restores
the explicit-CoT-level result while keeping reasoning latent at inference time.

We study the sensitivity of \method{} to the number of latent slots $L$,
APLR refinement passes $K$, and the loss balance $\lambda_{\mathrm{ce}}$ on
AndroidWorld. Table~\ref{tab:ablation_efficiency} shows that the latent computation
budget matters.

\begin{table}[H]
  \caption{AndroidWorld ablations over latent-slot budget, APLR refinement passes,
  and loss balance. SR is task success rate (\%). Unless otherwise noted,
  $\lambda_{\mathrm{ce}}=0.8$.}
  \label{tab:ablation_efficiency}
  \centering
  \begin{tabular}{l c c c c}
    \toprule
    \textbf{Model} & \textbf{Latent slots} & \textbf{Ref.\ passes}
      & $\boldsymbol{\lambda_{\mathrm{ce}}}$ & \textbf{SR (\%)} \\
    \midrule
    \method{}-8B & 6 & 2 & 0.8 & 46.6 \\
    \method{}-8B & 6 & 3 & 0.8 & \textbf{57.8} \\
    \method{}-4B & 9 & 3 & 0.8 & 52.6 \\
    \method{}-4B & 3 & 3 & 0.8 & 32.8 \\
    \method{}-4B & 9 & 3 & 0.1 & 48.3 \\
    \bottomrule
  \end{tabular}
\end{table}

For \method{}-8B, increasing APLR refinement from two to three passes improves
AndroidWorld SR from 46.6 to 57.8, suggesting that the third pass substantially
reduces harmful tail-latent error. For \method{}-4B, reducing the latent budget from
9 to 3 slots lowers SR from 52.6 to 32.8, showing that the mobile-agent thought
state needs enough continuous capacity to represent observation, rationale, and
future-screen prediction. Finally, setting $\lambda_{\mathrm{ce}}=0.1$ decreases SR
from 52.6 to 48.3, consistent with the intended role of the world model as an
auxiliary regularizer rather than the dominant training objective.

\subsection{Latent Reasoning Visualization}
\label{sec:latent_viz}

We analyze a \method{}-8B checkpoint with six latent slots on the AndroidControl
IDD split; Appendix~\ref{app:latent_full_viz} provides full t-SNE and per-slot
action projections. Figure~\ref{fig:latent_viz} shows that latent training does not
collapse to a single undifferentiated representation. Before slot-centering, slots
1--2, 3--4, and 5--6 occupy three separated regions, corresponding to the
observation, rationale, and prediction dimensions of the structured thought. After
subtracting the slot mean, examples remain organized by action type: open-app, swipe,
type, and tap states form distinguishable regions in the centered latent space. These
visualizations indicate that \method{} learns both position-specific reasoning
subspaces and action-discriminative latent states.

\begin{figure}[t]
  \centering
  \begin{minipage}[t]{0.44\linewidth}
    \centering
    \includegraphics[width=0.92\linewidth]{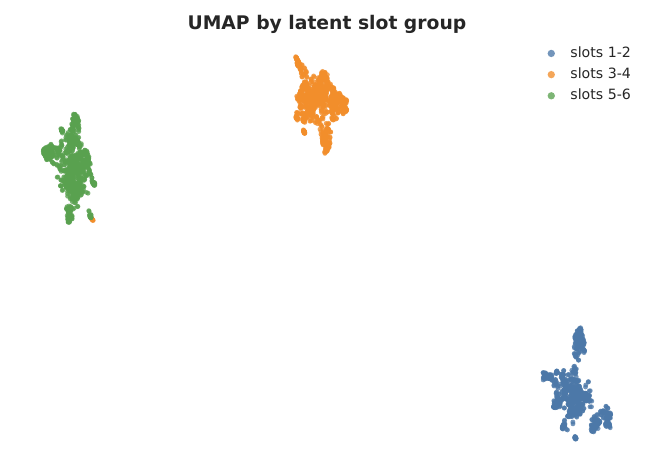}
  \end{minipage}\hfill
  \begin{minipage}[t]{0.48\linewidth}
    \centering
    \includegraphics[width=0.92\linewidth]{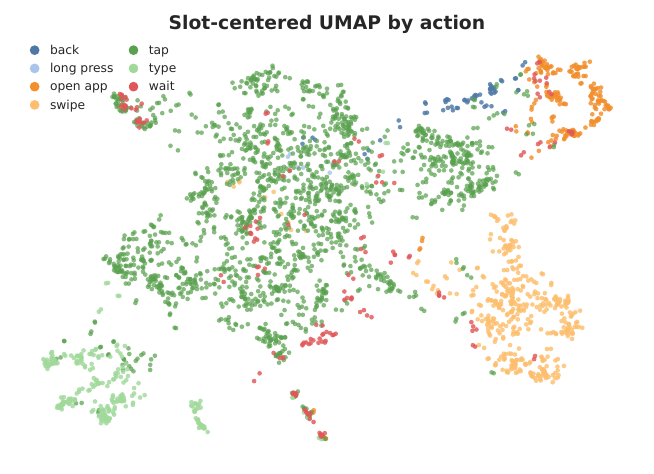}
  \end{minipage}
  \caption{\textbf{Left:} UMAP by latent slot group.
  \textbf{Right:} slot-centered UMAP by action type after subtracting per-slot means.}
  \label{fig:latent_viz}
\end{figure}

\FloatBarrier
\section{Limitations, Broader Impact, and Conclusion}

\method{} replaces visible rationales with APLR-refined latent slots regularized by
next-frame features, preserving explicit-CoT capability at much lower decoding cost.
Limitations include supervised-only training, feature-level world modeling,
next-frame supervision, and the need for privacy and action safeguards before deployment.

\newpage
\bibliographystyle{unsrtnat}
\bibliography{references}

@article{wei2022chain,
  title={Chain-of-thought prompting elicits reasoning in large language models},
  author={Wei, Jason and Wang, Xuezhi and Schuurmans, Dale and Bosma, Maarten and Xia, Fei and Chi, Ed and Le, Quoc V and Zhou, Denny and others},
  journal={Advances in neural information processing systems},
  volume={35},
  pages={24824--24837},
  year={2022}
}

@article{yao2022react,
  title={React: Synergizing reasoning and acting in language models},
  author={Yao, Shunyu and Zhao, Jeffrey and Yu, Dian and Du, Nan and Shafran, Izhak and Narasimhan, Karthik and Cao, Yuan},
  journal={arXiv preprint arXiv:2210.03629},
  year={2022}
}

@article{yao2022webshop,
  title={Webshop: Towards scalable real-world web interaction with grounded language agents},
  author={Yao, Shunyu and Chen, Howard and Yang, John and Narasimhan, Karthik},
  journal={Advances in Neural Information Processing Systems},
  volume={35},
  pages={20744--20757},
  year={2022}
}

@article{deng2023mind2web,
  title={Mind2web: Towards a generalist agent for the web},
  author={Deng, Xiang and Gu, Yu and Zheng, Boyuan and Chen, Shijie and Stevens, Sam and Wang, Boshi and Sun, Huan and Su, Yu},
  journal={Advances in Neural Information Processing Systems},
  volume={36},
  pages={28091--28114},
  year={2023}
}

@article{li2024effects,
  title={On the effects of data scale on ui control agents},
  author={Li, Wei and Bishop, William and Li, Alice and Rawles, Chris and Campbell-Ajala, Folawiyo and Tyamagundlu, Divya and Riva, Oriana},
  journal={Advances in Neural Information Processing Systems},
  volume={37},
  pages={92130--92154},
  year={2024}
}

@article{ha2018world,
  title={World models},
  author={Ha, David and Schmidhuber, J{\"u}rgen},
  journal={arXiv preprint arXiv:1803.10122},
  volume={2},
  number={3},
  pages={440},
  year={2018}
}

@article{hafner2019dream,
  title={Dream to control: Learning behaviors by latent imagination},
  author={Hafner, Danijar and Lillicrap, Timothy and Ba, Jimmy and Norouzi, Mohammad},
  journal={arXiv preprint arXiv:1912.01603},
  year={2019}
}

@inproceedings{li2023blip,
  title={Blip-2: Bootstrapping language-image pre-training with frozen image encoders and large language models},
  author={Li, Junnan and Li, Dongxu and Savarese, Silvio and Hoi, Steven},
  booktitle={International conference on machine learning},
  pages={19730--19742},
  year={2023},
  organization={PMLR}
}

@article{hao2022training,
  title={Training Large Language Models to Reason in a Continuous Latent Space},
  author={Hao, Shibo and Sukhbaatar, Sainbayar and Su, DiJia and Li, Xian and Hu, Zhiting and Weston, Jason and Tian, Yuandong},
  journal={arXiv preprint arXiv:2412.06769},
  year={2024}
}

@article{yuan2026fast,
  title={Fast-WAM: Do World Action Models Need Test-time Future Imagination?},
  author={Yuan, Tianyuan and Dong, Zibin and Liu, Yicheng and Zhao, Hang},
  journal={arXiv preprint arXiv:2603.16666},
  year={2026}
}

@article{rawles2024androidworld,
  title={Androidworld: A dynamic benchmarking environment for autonomous agents},
  author={Rawles, Christopher and Clinckemaillie, Sarah and Chang, Yifan and Waltz, Jonathan and Lau, Gabrielle and Fair, Marybeth and Li, Alice and Bishop, William and Li, Wei and Campbell-Ajala, Folawiyo and others},
  journal={arXiv preprint arXiv:2405.14573},
  year={2024}
}

@inproceedings{cheng2024seeclick,
  title={Seeclick: Harnessing gui grounding for advanced visual gui agents},
  author={Cheng, Kanzhi and Sun, Qiushi and Chu, Yougang and Xu, Fangzhi and YanTao, Li and Zhang, Jianbing and Wu, Zhiyong},
  booktitle={Proceedings of the 62nd Annual Meeting of the Association for Computational Linguistics (Volume 1: Long Papers)},
  pages={9313--9332},
  year={2024}
}

@article{wu2024atlas,
  title={Os-atlas: A foundation action model for generalist gui agents},
  author={Wu, Zhiyong and Wu, Zhenyu and Xu, Fangzhi and Wang, Yian and Sun, Qiushi and Jia, Chengyou and Cheng, Kanzhi and Ding, Zichen and Chen, Liheng and Liang, Paul Pu and others},
  journal={arXiv preprint arXiv:2410.23218},
  year={2024}
}

@article{qin2025ui,
  title={Ui-tars: Pioneering automated gui interaction with native agents},
  author={Qin, Yujia and Ye, Yining and Fang, Junjie and Wang, Haoming and Liang, Shihao and Tian, Shizuo and Zhang, Junda and Li, Jiahao and Li, Yunxin and Huang, Shijue and others},
  journal={arXiv preprint arXiv:2501.12326},
  year={2025}
}

@inproceedings{lin2025showui,
  title={Showui: One vision-language-action model for gui visual agent},
  author={Lin, Kevin Qinghong and Li, Linjie and Gao, Difei and Yang, Zhengyuan and Wu, Shiwei and Bai, Zechen and Lei, Stan Weixian and Wang, Lijuan and Shou, Mike Zheng},
  booktitle={Proceedings of the Computer Vision and Pattern Recognition Conference},
  pages={19498--19508},
  year={2025}
}

@article{gu2025ui,
  title={Ui-venus technical report: Building high-performance ui agents with rft},
  author={Gu, Zhangxuan and Zeng, Zhengwen and Xu, Zhenyu and Zhou, Xingran and Shen, Shuheng and Liu, Yunfei and Zhou, Beitong and Meng, Changhua and Xia, Tianyu and Chen, Weizhi and others},
  journal={arXiv preprint arXiv:2508.10833},
  year={2025}
}

@article{zhou2025mai,
  title={MAI-UI Technical Report: Real-World Centric Foundation GUI Agents},
  author={Zhou, Hanzhang and Zhang, Xu and Tong, Panrong and Zhang, Jianan and Chen, Liangyu and Kong, Quyu and Cai, Chenglin and Liu, Chen and Wang, Yue and Zhou, Jingren and others},
  journal={arXiv preprint arXiv:2512.22047},
  year={2025}
}

@article{wang2024mobile,
  title={Mobile-agent: Autonomous multi-modal mobile device agent with visual perception},
  author={Wang, Junyang and Xu, Haiyang and Ye, Jiabo and Yan, Ming and Shen, Weizhou and Zhang, Ji and Huang, Fei and Sang, Jitao},
  journal={arXiv preprint arXiv:2401.16158},
  year={2024}
}

@article{goyal2023think,
  title={Think before you speak: Training language models with pause tokens},
  author={Goyal, Sachin and Ji, Ziwei and Rawat, Ankit Singh and Menon, Aditya Krishna and Kumar, Sanjiv and Nagarajan, Vaishnavh},
  journal={arXiv preprint arXiv:2310.02226},
  year={2023}
}

@article{zelikman2024quiet,
  title={Quiet-star: Language models can teach themselves to think before speaking},
  author={Zelikman, Eric and Harik, Georges and Shao, Yijia and Jayasiri, Varuna and Haber, Nick and Goodman, Noah D},
  journal={arXiv preprint arXiv:2403.09629},
  year={2024}
}

@article{deng2023implicit,
  title={Implicit chain of thought reasoning via knowledge distillation},
  author={Deng, Yuntian and Prasad, Kiran and Fernandez, Roland and Smolensky, Paul and Chaudhary, Vishrav and Shieber, Stuart},
  journal={arXiv preprint arXiv:2311.01460},
  year={2023}
}

@inproceedings{assran2023self,
  title={Self-supervised learning from images with a joint-embedding predictive architecture},
  author={Assran, Mahmoud and Duval, Quentin and Misra, Ishan and Bojanowski, Piotr and Vincent, Pascal and Rabbat, Michael and LeCun, Yann and Ballas, Nicolas},
  booktitle={Proceedings of the IEEE/CVF conference on computer vision and pattern recognition},
  pages={15619--15629},
  year={2023}
}

@article{bardes2024revisiting,
  title={Revisiting feature prediction for learning visual representations from video},
  author={Bardes, Adrien and Garrido, Quentin and Ponce, Jean and Chen, Xinlei and Rabbat, Michael and LeCun, Yann and Assran, Mahmoud and Ballas, Nicolas},
  journal={arXiv preprint arXiv:2404.08471},
  year={2024}
}

@article{luo2025vimo,
  title={Vimo: A generative visual gui world model for app agents},
  author={Luo, Dezhao and Tang, Bohan and Li, Kang and Papoudakis, Georgios and Song, Jifei and Gong, Shaogang and Hao, Jianye and Wang, Jun and Shao, Kun},
  journal={arXiv preprint arXiv:2504.13936},
  year={2025}
}

@article{cao2026mobiledreamer,
  title={MobileDreamer: Generative Sketch World Model for GUI Agent},
  author={Cao, Yilin and Zhong, Yufeng and Zeng, Zhixiong and Zheng, Liming and Huang, Jing and Qiu, Haibo and Shi, Peng and Mao, Wenji and Guanglu, Wan},
  journal={arXiv preprint arXiv:2601.04035},
  year={2026}
}

@article{bai2025qwen3,
  title={Qwen3-vl technical report},
  author={Bai, Shuai and Cai, Yuxuan and Chen, Ruizhe and Chen, Keqin and Chen, Xionghui and Cheng, Zesen and Deng, Lianghao and Ding, Wei and Gao, Chang and Ge, Chunjiang and others},
  journal={arXiv preprint arXiv:2511.21631},
  year={2025}
}

@inproceedings{chai2025amex,
  title={Amex: Android multi-annotation expo dataset for mobile gui agents},
  author={Chai, Yuxiang and Huang, Siyuan and Niu, Yazhe and Xiao, Han and Liu, Liang and Wang, Guozhi and Zhang, Dingyu and Ren, Shuai and Li, Hongsheng},
  booktitle={Findings of the Association for Computational Linguistics: ACL 2025},
  pages={2138--2156},
  year={2025}
}

@article{hurst2024gpt,
  title={Gpt-4o system card},
  author={Hurst, Aaron and Lerer, Adam and Goucher, Adam P and Perelman, Adam and Ramesh, Aditya and Clark, Aidan and Ostrow, AJ and Welihinda, Akila and Hayes, Alan and Radford, Alec and others},
  journal={arXiv preprint arXiv:2410.21276},
  year={2024}
}

@article{luo2025gui,
  title={Gui-r1: A generalist r1-style vision-language action model for gui agents},
  author={Luo, Run and Wang, Lu and He, Wanwei and Chen, Longze and Li, Jiaming and Xia, Xiaobo},
  journal={arXiv preprint arXiv:2504.10458},
  year={2025}
}

@inproceedings{lu2026ui,
  title={Ui-r1: Enhancing efficient action prediction of gui agents by reinforcement learning},
  author={Lu, Zhengxi and Chai, Yuxiang and Guo, Yaxuan and Yin, Xi and Liu, Liang and Wang, Hao and Xiao, Han and Ren, Shuai and Zhao, Pengxiang and Liu, Guangyi and others},
  booktitle={Proceedings of the AAAI Conference on Artificial Intelligence},
  volume={40},
  number={21},
  pages={17608--17616},
  year={2026}
}

@article{yang2025ferret,
  title={Ferret-ui lite: Lessons from building small on-device gui agents},
  author={Yang, Zhen and Dou, Zi-Yi and Feng, Di and Huang, Forrest and Nguyen, Anh and You, Keen and Attia, Omar and Yang, Yuhao and Feng, Michael and Zhang, Haotian and others},
  journal={arXiv preprint arXiv:2509.26539},
  year={2025}
}

@article{hu2025tinyalign,
  title={TinyAlign: Boosting Lightweight Vision-Language Models by Mitigating Modal Alignment Bottlenecks},
  author={Hu, Yuanze and Fan, Zhaoxin and Wang, Xinyu and Li, Gen and Qiu, Ye and Yang, Zhichao and Wu, Wenjun and Wu, Kejian and Sun, Yifan and Deng, Xiaotie and others},
  journal={arXiv preprint arXiv:2505.12884},
  year={2025}
}

\appendix

\section{Additional Method Details}

\paragraph{Q-Former target construction.}
The world-model target is built online during training. For each sample with a valid next frame, the collator loads $x_{t+1}$ and the base VLM vision stack extracts post-spatial-merge patch features. The Q-Former aligner never receives these target features as input; it receives only latent CoT hidden states and row/column query embeddings.

\paragraph{No target leakage.}
The world-model predictor receives only the final latent hidden states from the current observation and history. Next-frame pixels are used only on the detached target side of the feature-alignment loss, and missing next frames are masked out.

\section{Training Data and Schedule}
\label{app:training_details}

\paragraph{Training data.}
We train on two sources of mobile-interaction data. First, we sample a subset of
\textit{AMEX}~\citep{chai2025amex}, the Android Multi-annotation Expo dataset,
which provides 104K high-resolution screenshots annotated with GUI-element
groundings and step-wise action chains across diverse Android applications. Second,
we collect \textit{self-explored trajectories} on AndroidWorld~\citep{rawles2024androidworld}
by running an exploration policy on device, producing task rollouts that extend the
distribution of observed app states beyond the static AMEX corpus.

\paragraph{Training procedure.}
Training proceeds in two stages. In Stage~1, we fine-tune the backbone for 1 epoch on
explicit chain-of-thought demonstrations, initializing the model with standard
text-based reasoning traces and action formatting. In Stage~2, we switch to the
latent-CoT objective and train for 3 epochs. The main configuration compresses
reasoning into $L=6$ latent slots and refines them over $K=3$ APLR passes; component
ablations additionally evaluate the 4B model with $L=9$ latent slots. Unless otherwise
specified, the cross-entropy loss weight is $\lambda_{\mathrm{ce}}=0.8$.

\section{Complete Latent Visualization Diagnostics}
\label{app:latent_full_viz}

This appendix reports the full latent-slot diagnostic views used to support
Section~\ref{sec:latent_viz}. The analysis is run on a \method{}-8B checkpoint evaluated on the
AndroidControl IDD split. The diagnostic dump contains 1,012 examples and 3,036
latent vectors after grouping the six latent slots into three adjacent pairs:
slots 1--2, slots 3--4, and slots 5--6. This grouping matches the main-paper
visualization, where the paired slots form three separated positional clusters.

\begin{figure}[p]
  \centering
  \includegraphics[width=\linewidth]{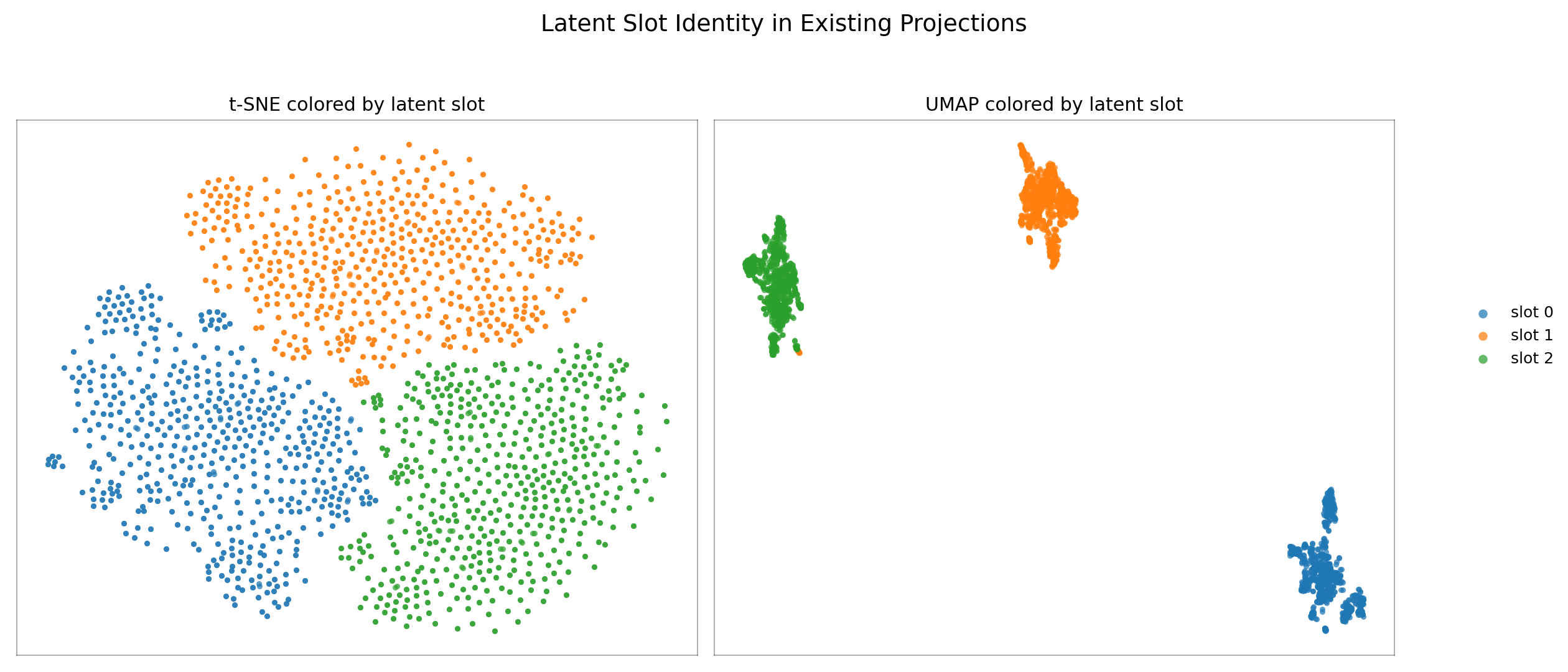}
  \caption{Latent slot identity in existing projections. The left panel colors a
  t-SNE projection by latent slot group, while the right panel colors a UMAP
  projection by the same group. Both projections show that adjacent slot pairs
  form three well-separated regions. This indicates that the latent representation
  contains a strong slot-position component: early, middle, and late latent slots
  occupy distinct subspaces before any slot-centering operation is applied.}
  \label{fig:app_slot_identity}
\end{figure}

Figure~\ref{fig:app_slot_identity} supports the interpretation that the latent
sequence is not a bag of exchangeable hidden states. The three slot groups form
separate clusters, especially in UMAP, which means the model uses different parts
of the continuous latent sequence differently. In the MIRAGE design, this is
desirable: the first slot group can specialize toward observation-level evidence,
the middle group toward action rationale, and the final group toward transition
prediction. The visualization does not prove this semantic assignment by itself,
but it shows that the model has learned a stable positional organization on which
such specialization can be built.

\begin{figure}[p]
  \centering
  \includegraphics[width=\linewidth]{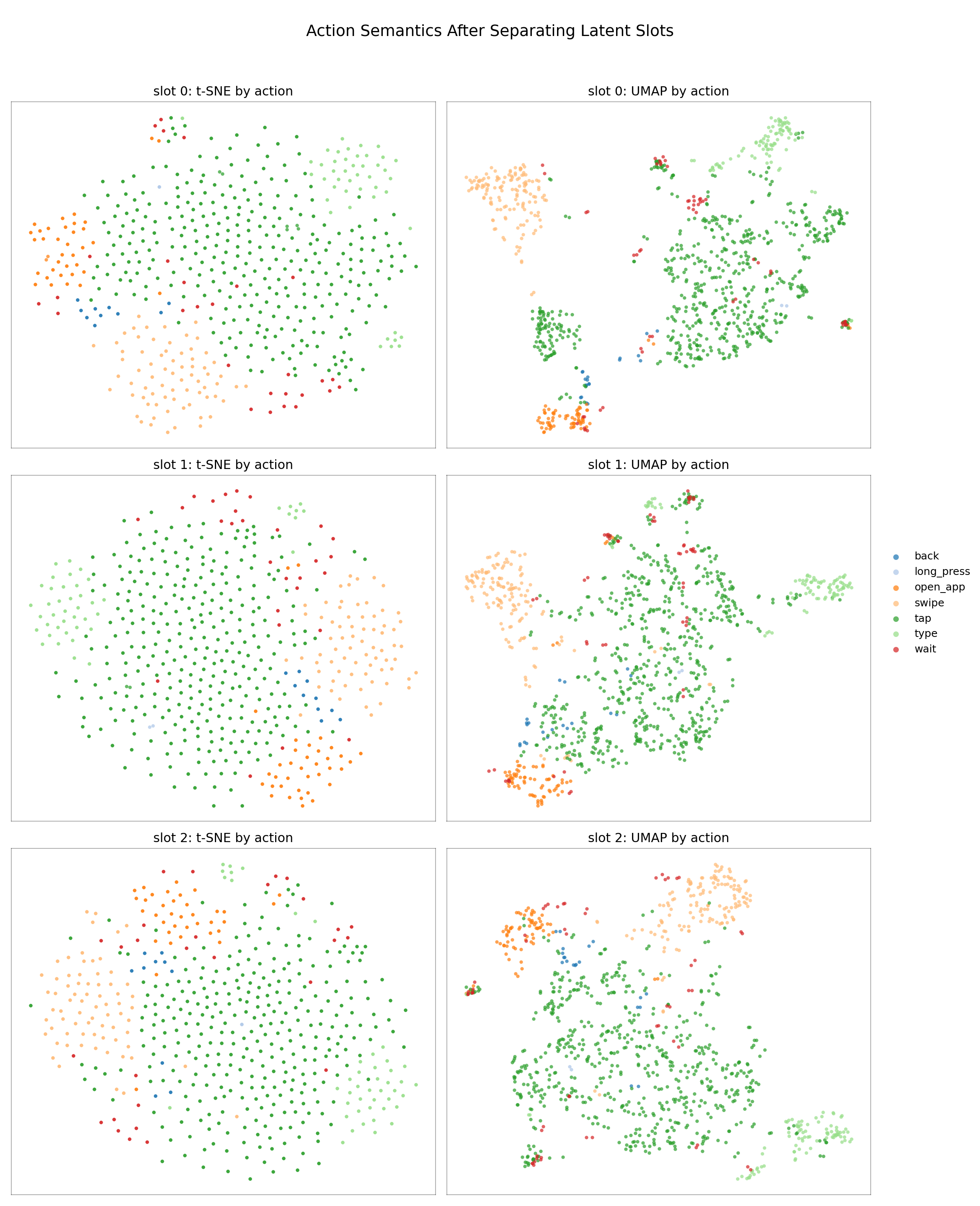}
  \caption{Per-slot action projections. Each subplot projects one slot group and
  colors points by the ground-truth action type. These plots ask whether action
  semantics are visible inside each slot group without first removing the slot
  mean.}
  \label{fig:app_per_slot_action}
\end{figure}

Figure~\ref{fig:app_per_slot_action} shows that action information is present but
partially entangled with slot identity. High-frequency tap examples spread broadly,
as expected for generic GUI interaction, while more specialized actions such as
open-app, swipe, and type appear in more localized regions. The per-slot views are
useful because they reveal whether one slot group alone carries the action signal.
In our diagnostic run, action separation is visible in each group but remains
imperfect, suggesting that action semantics are distributed across the latent
sequence rather than isolated in a single slot.

\begin{figure}[p]
  \centering
  \includegraphics[width=\linewidth]{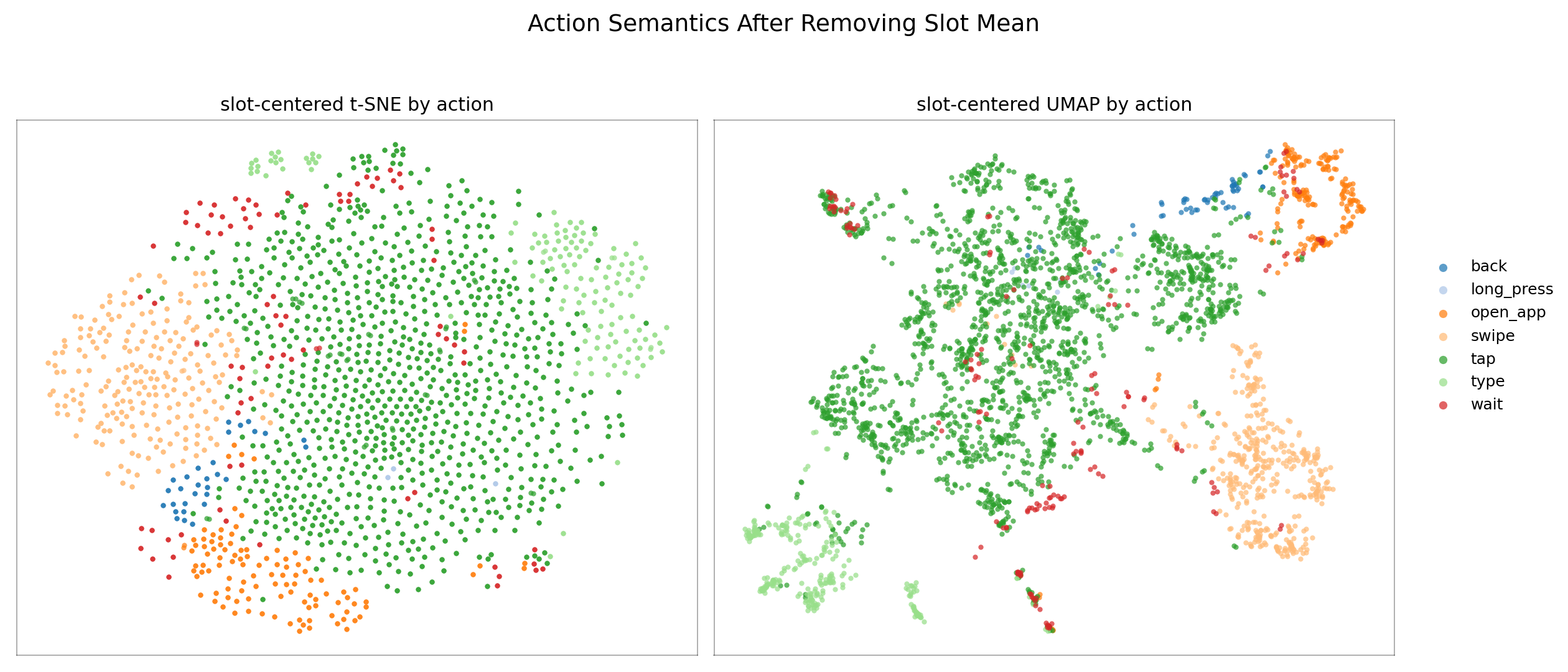}
  \caption{Action semantics after removing the slot mean. The left panel shows
  slot-centered t-SNE by action type, and the right panel shows slot-centered UMAP
  by action type. Slot centering subtracts the average representation of each
  slot group before projection, reducing the dominant slot-identity component.}
  \label{fig:app_slot_centered_action}
\end{figure}

Figure~\ref{fig:app_slot_centered_action} complements
Figure~\ref{fig:app_slot_identity}. Once the slot-specific mean is removed, action
structure becomes clearer. Open-app examples form a compact region, swipe examples
occupy a separate region, type examples gather in another area, and tap remains a
broad central manifold. This behavior is consistent with the main claim of
Section~4.3: MIRAGE latent states encode both where they are in the latent
reasoning sequence and what action semantics they support. The slot-position
component is strong, but it does not erase action-relevant information; after
centering, the residual representation still organizes examples by action type.

\section{Formal Derivation of Original Coconut}
\label{app:original_coconut}

This section formalizes the original Coconut-style latent reasoning procedure \citep{hao2022training} that motivates our APLR approximation. Coconut starts from supervised examples with a context $c$ and a visible chain of reasoning split into $M$ ordered chunks $\rho_{1:M}$, followed by a final answer or action $a$. For mobile agents, $c$ corresponds to the current screenshot, instruction, and interaction history, while $a$ is the serialized GUI action.

\paragraph{Explicit chain-of-thought objective.}
The ordinary explicit-CoT likelihood factorizes as
\begin{equation}
    p_\theta(\rho_{1:M},a\mid c)
    =
    \prod_{i=1}^{M}
    p_\theta(\rho_i\mid c,\rho_{<i})
    \cdot
    p_\theta(a\mid c,\rho_{1:M}),
    \label{eq:explicit_coconut_appendix}
\end{equation}
where each factor over a chunk denotes the product of token probabilities inside that chunk. Stage~0 of Coconut is equivalent to minimizing the standard next-token loss on the full sequence $(c,\rho_{1:M},a)$.

\paragraph{Continuous-thought substitution.}
Coconut introduces boundary tokens $\langle\mathrm{bot}\rangle$ and $\langle\mathrm{eot}\rangle$ for a latent reasoning span. Let $H_\theta(S)$ be the final-layer hidden states produced by the decoder for a mixed sequence $S$ containing both token embeddings and continuous vectors. Let $F_\theta(S)$ denote the hidden state at the last position of $S$, optionally followed by a projection $\Pi$ into the input-embedding space:
\begin{equation}
    F_\theta(S) = \Pi\, H_\theta(S)_{|S|}.
    \label{eq:coconut_hidden_map}
\end{equation}
At curriculum stage $m$, Coconut replaces the first $m$ visible reasoning chunks by $m$ continuous thoughts $z_{1:m}$. These thoughts are generated recursively:
\begin{align}
    S_0 &= [c;\langle\mathrm{bot}\rangle], \\
    z_i &= F_\theta([S_0;z_1;\ldots;z_{i-1}]),
    \qquad i=1,\ldots,m .
    \label{eq:coconut_recursive_latents}
\end{align}
The resulting training sequence is
\begin{equation}
    \tilde{S}^{(m)}
    =
    [c;\langle\mathrm{bot}\rangle;z_1;\ldots;z_m;\langle\mathrm{eot}\rangle;
    \rho_{m+1};\ldots;\rho_M;a].
    \label{eq:coconut_stage_sequence}
\end{equation}
Continuous positions are not vocabulary targets. The supervised likelihood at stage $m$ is therefore
\begin{equation}
    p_\theta^{(m)}(\rho_{m+1:M},a\mid c)
    =
    \prod_{i=m+1}^{M}
    p_\theta(\rho_i\mid c,z_{1:m},\rho_{m+1:i-1})
    \cdot
    p_\theta(a\mid c,z_{1:m},\rho_{m+1:M}),
    \label{eq:coconut_stage_likelihood}
\end{equation}
with loss
\begin{equation}
    \mathcal{L}_{\mathrm{Coconut}}^{(m)}
    =
    -\sum_{\ell\in\mathcal{I}^{(m)}}
    \log p_\theta(\tilde{S}^{(m)}_\ell\mid \tilde{S}^{(m)}_{<\ell}),
    \label{eq:coconut_stage_loss}
\end{equation}
where $\mathcal{I}^{(m)}$ indexes only discrete supervised tokens after the latent prefix. Increasing $m$ gradually moves supervision from explicit rationales to latent thoughts: $m=0$ recovers standard CoT training, while $m=M$ leaves only the final answer or action as visible supervision after the latent span.

\paragraph{Serial triangular computation.}
Equation~\eqref{eq:coconut_recursive_latents} is inherently serial because $z_i$ must be computed and inserted before $z_{i+1}$ can attend to it. Abstracting away the visible suffix, define a decoder-induced map $G_i$ for the $i$-th latent update under fixed context $c$:
\begin{equation}
    z_i = G_i(z_1,\ldots,z_{i-1};c),
    \qquad i=1,\ldots,m.
    \label{eq:coconut_triangular_system}
\end{equation}
Causal attention makes this system strictly triangular: $G_i$ cannot depend on $z_i$ or any future latent $z_j$ with $j>i$. Original Coconut solves Eq.~\eqref{eq:coconut_triangular_system} by forward substitution, i.e., it computes $z_1$, writes it into the sequence, then computes $z_2$, and so on. This exact serial execution is the reference solution that APLR approximates with parallel Jacobi-style refinement.

\section{Appendix Pipeline Pseudocode}
\label{app:pipeline_algorithm}

Algorithm~\ref{alg:mirage_pipeline} summarizes the full training pipeline.

\begin{algorithm}[t]
\caption{\method{} two-stage training pipeline}
\label{alg:mirage_pipeline}
\begin{algorithmic}[1]
\Require explicit records $(o_t,\tau_t,d_t,a_t)$ and sequential records with optional next frame $x_{t+1}$
\Ensure a latent-reasoning GUI policy; discard the Q-Former head at inference time

\Statex \textbf{Stage 1: explicit-thought warmup}
\State Serialize $[o_t;\langle\mathrm{THOUGHT}\rangle;\tau_t;\langle\mathrm{ACTION\_DESC}\rangle;d_t;\langle\mathrm{ACTION}\rangle;a_t]$
\State Optimize next-token CE over the structured thought, action description, and action tokens

\Statex \textbf{Stage 2: latent CoT and world-model training}
\State Replace the \texttt{<THOUGHT>} block with $N$ learned latent slots as in Eq.~\eqref{eq:latent_sequence}

\For{$r=0$ \textbf{to} $K-1$}
    \State Run one causal VLM pass and update all latent slots synchronously using Eq.~\eqref{eq:aplr_update}
\EndFor

\State Decode action tokens and compute $\mathcal{L}_{\mathrm{ce}}$ by Eq.~\eqref{eq:action_ce}

\If{a valid next frame $x_{t+1}$ exists}
    \State Extract detached targets $V_{t+1}=\mathrm{sg}(f_{\mathrm{vis}}(x_{t+1}))$
    \State Predict $\hat{V}_{t+1}$ from latent states with the Q-Former and compute $\mathcal{L}_{\mathrm{wm}}$
\Else
    \State Mask out the world-model term for this sample
\EndIf

\State Update parameters with $\mathcal{L}=\lambda\mathcal{L}_{\mathrm{ce}}+(1-\lambda)\mathcal{L}_{\mathrm{wm}}$
\State \Return latent-reasoning GUI policy
\end{algorithmic}
\end{algorithm}

\section{APLR Recovery Proof}
\label{app:aplr_proof}

We formalize the statement used in Section~\ref{sec:aplr}. The goal is not to prove that a small number of APLR rounds exactly reproduces every serial latent state. Instead, the result is sharper: after $K$ refinement rounds, the first $K$ latent states are exactly identical to the serial rollout, while the remaining latent states have a structured tail error that propagates only through the strictly causal dependency graph.

\paragraph{Setup.}
Consider one sequence with $N$ latent slots. Each latent state lies in $\mathbb{R}^d$. Let $c$ denote all non-latent context: instruction tokens, history tokens, image embeddings, attention masks, and position encodings. For each latent index $i$, define $G_i$ as the decoder-induced update that maps the fixed context and all earlier latent states to the hidden state used to replace latent slot $i$. Causal attention implies
\begin{equation}
    G_i = G_i(z_1,\ldots,z_{i-1}; c),
    \label{eq:gi_causal}
\end{equation}
so $G_i$ cannot depend on $z_i$ or any future latent $z_j$ with $j>i$.

The serial latent rollout is the forward-substitution solution
\begin{equation}
    s_i = G_i(s_1,\ldots,s_{i-1}; c), \qquad i=1,\ldots,N,
    \label{eq:serial_appendix}
\end{equation}
with the convention that $G_1$ depends only on $c$. APLR initializes every latent slot with the same learned latent embedding, written abstractly as $z_i^{(0)}=e_{\mathrm{lat}}$, and performs the parallel Jacobi-style update
\begin{equation}
    z_i^{(r+1)} = G_i(z_1^{(r)},\ldots,z_{i-1}^{(r)}; c),
    \qquad i=1,\ldots,N.
    \label{eq:aplr_appendix}
\end{equation}
The word ``parallel'' means that all right-hand sides in Eq.~\ref{eq:aplr_appendix} are evaluated using round-$r$ latent states before any slot is overwritten for round $r+1$.

\begin{lemma}[Causal dependency cone]
After $r$ APLR refinement rounds, latent slot $z_i^{(r)}$ can depend on the initial latent values only through slots with indices at most $i-r$. Equivalently, information from the exact serial prefix can advance by at most one latent position per refinement round.
\end{lemma}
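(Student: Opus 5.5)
Since $G_i$ depends only on $z_1,\ldots,z_{i-1}$ and $c$ by Eq.~\eqref{eq:gi_causal}, we would prove the lemma by induction on the number of rounds $r$. We first make ``depend on the initial latent values'' precise. Treat the initial values $z_1^{(0)},\ldots,z_N^{(0)}$ as independent formal inputs rather than fixing them all to $e_{\mathrm{lat}}$. For each $r$ and $i$, define $D_i^{(r)}\subseteq\{1,\ldots,N\}$ as the set of indices $j$ such that $z_i^{(r)}$, viewed as a function of $(z_1^{(0)},\ldots,z_N^{(0)};c)$, actually varies with $z_j^{(0)}$. The claim is then
\[
D_i^{(r)}\subseteq\{j: j\le i-r\}.
\]
In particular, $D_i^{(r)}=\emptyset$ whenever $i\le r$.

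\emph{Base case} $r=0$. Here $z_i^{(0)}$ depends only on $z_i^{(0)}$, so $D_i^{(0)}=\{i\}$, which satisfies the bound.

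\emph{Inductive step.} Assume the bound holds at round $r$. By Eq.~\eqref{eq:aplr_appendix},
\[
z_i^{(r+1)}=G_i\bigl(z_1^{(r)},\ldots,z_{i-1}^{(r)};c\bigr).
\]
Since $c$ is fixed and contains no initial latents, a composition can only inherit dependencies from its arguments. Hence
\[
D_i^{(r+1)}\subseteq\bigcup_{k=1}^{i-1}D_k^{(r)}\subseteq\{j: j\le (i-1)-r\}=\{j: j\le i-(r+1)\}.
\]
This closes the induction. The ``one position per round'' reformulation is the same statement, since the admissible window $j\le i-r$ shrinks by exactly one index per round.

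As a corollary, we would also show that $z_i^{(r)}=s_i$ for $i\le r$. This follows by a parallel induction: if $z_k^{(r)}=s_k$ for all $k\le r$, then for $i\le r+1$ the arguments of $G_i$ all have index at most $r$ and are therefore exact, so $z_i^{(r+1)}=G_i(s_1,\ldots,s_{i-1};c)=s_i$. This yields the recovery of the first $K$ slots that Section~\ref{sec:aplr} states.

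The main obstacle is definitional rather than technical. Because every slot is initialized to the same shared $e_{\mathrm{lat}}$, ``dependence'' must be understood with the initial values as separate variables, or equivalently through perturbations of individual initial slots. We would also state explicitly that $c$, including attention masks and position encodings, is held fixed across rounds and carries no latent information. With those conventions in place, the union bound in the inductive step is routine.
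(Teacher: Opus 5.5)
Your proof is correct and is essentially the paper's argument: an induction on $r$ whose inductive step notes that $z_i^{(r+1)}$ reads only predecessors $j\le i-1$, each depending on initial slots with index at most $j-r$. Your one addition is to define dependence precisely, treating the initial slots as independent inputs and tracking the sets $D_i^{(r)}$, which makes explicit what the paper leaves informal given the shared $e_{\mathrm{lat}}$ initialization.
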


\begin{proof}
The claim is immediate for $r=0$ because $z_i^{(0)}$ is itself an initial value. Assume it holds for round $r$. At round $r+1$, $z_i^{(r+1)}$ is a function of $z_1^{(r)},\ldots,z_{i-1}^{(r)}$ and the fixed context $c$. By the induction hypothesis, each predecessor $z_j^{(r)}$ can depend on initial slots only up to index $j-r$. Since $j\leq i-1$, the largest such index is $(i-1)-r=i-(r+1)$. Thus $z_i^{(r+1)}$ can depend on initial values only through slots with indices at most $i-(r+1)$. This proves the claim by induction.
\end{proof}

\begin{proposition}[Exact recovery of early latent states]
For any number of refinement rounds $r \geq 0$, APLR satisfies
\begin{equation}
    z_i^{(r)} = s_i \qquad \text{for all } i \leq r.
\end{equation}
Consequently, after $K$ refinement rounds, APLR exactly recovers the first $K$ latent states of the serial rollout, and after $N$ rounds it recovers the full serial latent sequence.
\end{proposition}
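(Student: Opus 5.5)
The plan is a direct induction on the number of refinement rounds $r$, using only the causal structure in Eq.~\eqref{eq:gi_causal}. The invariant to carry is exactly the statement itself: $z_i^{(r)}=s_i$ for every $i\le r$. For $r=0$ the condition is vacuous because there is no index $i\le 0$, so the base case needs no argument.

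For the inductive step, assume $z_j^{(r)}=s_j$ for all $j\le r$, and fix any $i\le r+1$. By the parallel update in Eq.~\eqref{eq:aplr_appendix}, $z_i^{(r+1)}=G_i(z_1^{(r)},\ldots,z_{i-1}^{(r)};c)$. Only indices $j\le i-1\le r$ appear as arguments, so the induction hypothesis replaces each $z_j^{(r)}$ by $s_j$. This gives $z_i^{(r+1)}=G_i(s_1,\ldots,s_{i-1};c)=s_i$ by Eq.~\eqref{eq:serial_appendix}. The case $i=1$ is covered by the convention that $G_1$ depends only on $c$, so $z_1^{(r+1)}=G_1(c)=s_1$ for every $r\ge 0$.

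The whole argument rests on strict triangularity: $G_i$ depends neither on $z_i$ nor on any later slot. Without that, a slot not yet fixed at round $r$, namely some $z_j^{(r)}$ with $j>r$, could leak into the update of an early slot. The Causal dependency cone lemma is an equivalent viewpoint: for $i\le r$ the index bound $i-r\le 0$ means $z_i^{(r)}$ does not depend on any initial value. Hence $z_i^{(r)}$ is determined by $c$ alone and must agree with the serial solution. I will note this as a remark but keep the direct induction as the proof.

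The two consequences follow immediately. Setting $r=K$ gives recovery of the first $K$ slots. Setting $r=N$ covers all indices $i\le N$, so the entire serial sequence is recovered after $N$ rounds. Moreover, $z^{(N)}$ is then a fixed point, since a further round reproduces $s$ by the same computation. There is no real obstacle here; the only care needed is to check that the "parallel" semantics use round-$r$ values throughout, which is exactly what makes the index bound $j\le i-1\le r$ valid.
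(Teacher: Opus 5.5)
Your proof is correct and matches the paper's argument essentially verbatim. Both use induction on $r$ with a vacuous base case, invoke the hypothesis via the predecessor bound $j\le i-1\le r$, and treat $i=1$ separately as the context-only case $G_1(c)=s_1$; your remarks on the dependency-cone lemma and on $z^{(N)}=s$ being a fixed point are accurate but not needed.
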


\begin{proof}
We prove the claim by induction on the refinement round $r$.

\emph{Base case.} For $r=0$, the set of indices satisfying $i \leq 0$ is empty, so the claim holds vacuously. No latent has been refined yet, and no equality with the serial rollout is required.

\emph{Inductive step.} Assume that after round $r$, $z_j^{(r)}=s_j$ for every $j \leq r$. Consider round $r+1$ and any latent index $i \leq r+1$. If $i=1$, then the causal update has no latent predecessor. Both APLR and serial forward substitution therefore use exactly the same context-only map:
\begin{equation}
    z_1^{(r+1)} = G_1(c) = s_1.
\end{equation}
If $i>1$, then every predecessor index $j<i$ satisfies $j \leq r$ because $i\leq r+1$. By the induction hypothesis, all predecessor values used by the APLR update are already equal to their serial values:
\begin{equation}
    (z_1^{(r)},\ldots,z_{i-1}^{(r)})
    =
    (s_1,\ldots,s_{i-1}).
\end{equation}
Substituting these equal predecessors into the parallel update gives
\begin{equation}
    z_i^{(r+1)}
    = G_i(z_1^{(r)},\ldots,z_{i-1}^{(r)}; c)
    = G_i(s_1,\ldots,s_{i-1}; c)
    = s_i,
\end{equation}
where the final equality follows from the serial recurrence in Eq.~\ref{eq:serial_appendix}. Thus the claim holds for round $r+1$, completing the induction.
\end{proof}

\begin{corollary}[Which latent slots remain approximate]
After $K$ refinement rounds, the only latent slots that can differ from the serial rollout are the tail slots
\begin{equation}
    z_{K+1}^{(K)}, z_{K+2}^{(K)}, \ldots, z_N^{(K)}.
\end{equation}
The first $K$ slots have zero serial-approximation error.
\end{corollary}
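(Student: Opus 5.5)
The corollary is a direct consequence of the preceding Proposition (exact recovery of early latent states), specialized to $r=K$, so the plan is short.

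First, I will instantiate the Proposition with $r=K$. It gives $z_i^{(K)}=s_i$ for every $i\le K$. This is exactly the second sentence of the corollary: the serial-approximation error $z_i^{(K)}-s_i$ vanishes on the first $K$ slots.

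Second, I will obtain the first sentence by taking complements within the index set $\{1,\ldots,N\}$. Any slot with $z_i^{(K)}\neq s_i$ must have $i>K$, so it lies among $z_{K+1}^{(K)},\ldots,z_N^{(K)}$. I will note the boundary cases explicitly. If $K\ge N$, this tail set is empty and APLR recovers the whole serial sequence, consistent with the last clause of the Proposition. If $K<N$, the tail is nonempty.

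The statement only says that these tail slots \emph{can} differ, not that they must. So no lower bound on the tail error is needed. Optionally, I would add a remark drawing on the Causal dependency cone lemma: $z_i^{(K)}$ still depends on the initial values $e_{\mathrm{lat}}$ through slots with index at most $i-K\ge 1$ whenever $i>K$. This explains why exact agreement cannot be guaranteed in general; for example, take $G_{K+1}$ to depend nontrivially on its first argument. This motivates the tail-error analysis of Eq.~\eqref{eq:aplr_tail_error}.

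There is essentially no obstacle here. The only care needed is to state that "differ" refers to the serial rollout $s_i$ of Eq.~\eqref{eq:serial_appendix}, and to handle the $K\ge N$ case so the indexing is not vacuous or out of range.
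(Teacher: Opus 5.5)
Your argument is correct and matches the paper, which states the corollary without a separate proof: it is the $r=K$ case of the exact-recovery proposition, with the tail slots being the complement of $\{1,\ldots,K\}$ (your handling of $K\ge N$ is a harmless extra). One small slip in your optional remark: for $K\ge 2$ the first argument of $G_{K+1}$ at round $K-1$ is already $z_1^{(K-1)}=s_1$, so a witness of nonzero tail error should instead use the subdiagonal chain from the dependency-cone lemma, where each $G_{j+1}$ depends on its $j$-th argument for $j=1,\ldots,K$ and $e_{\mathrm{lat}}\neq s_1$.
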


\paragraph{Local tail-error expression.}
The same triangular structure explains the approximation error for the tail slots when $K<N$. Let $\delta_i^{(r)} = z_i^{(r)} - s_i$. Assume each $G_i$ is twice continuously differentiable in a neighborhood of the serial solution. Define the block Jacobian
\begin{equation}
    A_{ij} =
    \begin{cases}
    \frac{\partial G_i}{\partial z_j}\big|_{s,c}, & j<i,\\
    0, & j\geq i.
    \end{cases}
\end{equation}
The matrix $A$ is strictly block-lower-triangular. A Taylor expansion around $(s_1,\ldots,s_{i-1})$ gives
\begin{equation}
    \delta_i^{(r+1)}
    =
    \sum_{j<i}
    A_{ij}
    \delta_j^{(r)}
    + R_i^{(r)},
    \qquad
    \|R_i^{(r)}\| \leq C_i \|\delta_{<i}^{(r)}\|^2
\end{equation}
for constants $C_i$ determined by local Hessian bounds. 
Stacking all latent errors yields
\begin{equation}
    \delta^{(r+1)} = A\delta^{(r)} + R^{(r)},
    \qquad
    \|R^{(r)}\| = O(\|\delta^{(r)}\|^2).
\end{equation}
Ignoring higher-order terms, the $K$-round tail error is
\begin{equation}
    \delta^{(K)} \approx A^K \delta^{(0)}.
    \label{eq:tail_error_linear}
\end{equation}
Because $A$ is strictly lower triangular, $(A^K)_{ij}$ can be nonzero only when $j \leq i-K$. Thus Eq.~\ref{eq:tail_error_linear} has two consequences. First, rows $i\leq K$ are exactly zero, matching the exact-recovery proposition. Second, for a tail latent $i>K$, the remaining error comes only from causal chains of length at least $K$ that connect earlier imperfect initial values to slot $i$. These are precisely the deep latent states that have not received enough refinement rounds to match the serial rollout.

\section{How the World-Model Loss Regularizes Tail Errors}
\label{app:wm_tail_error}

The previous section identifies the approximation error left by using $K<N$ APLR rounds: it lives only in the tail latent slots $z_{K+1:N}^{(K)}$. The Q-Former world-model loss does not make these tail slots algebraically identical to the serial rollout. Instead, it adds direct supervision to the components of their error that affect prediction of the next mobile screen in the VLM visual feature space. This is the useful notion of compensation for a mobile agent: the auxiliary loss penalizes tail errors that discard environment-transition information.

\paragraph{Tail-error notation.}
Let
\begin{equation}
    \tau^{(K)} =
    \big[
    \delta_{K+1}^{(K)},\ldots,\delta_N^{(K)}
    \big]
\end{equation}
denote the stacked tail error after $K$ APLR rounds. Let $P_\phi$ denote the Q-Former aligner, including the row/column query embeddings, cross-attention into latent CoT hidden states, and final visual projection. For a valid next-frame transition, define the detached target feature matrix
\begin{equation}
    V^\star = \mathrm{sg}(f_{\mathrm{vis}}(x_{t+1})) \in \mathbb{R}^{M\times H_v}.
\end{equation}
The Q-Former prediction as a function of the latent state is
\begin{equation}
    \hat{V}(\tau) = P_\phi(s+\tau; R),
\end{equation}
where $R$ is the set of valid row/column patch coordinates. For the cosine variant used by default, write normalized predicted and target features as
\begin{equation}
    g(\tau) = \mathrm{norm}(\hat{V}(\tau)),
    \qquad
    u = \mathrm{norm}(V^\star),
\end{equation}
where normalization is applied patchwise. The masked world-model loss is
\begin{equation}
    \ell_{\mathrm{wm}}(\tau)
    =
    \frac{1}{|\mathcal{M}|}
    \sum_{j\in\mathcal{M}}
    \left(1-\langle g_j(\tau), u_j\rangle\right)
    =
    \frac{1}{2|\mathcal{M}|}
    \sum_{j\in\mathcal{M}}
    \|g_j(\tau)-u_j\|_2^2,
    \label{eq:cosine_squared_distance}
\end{equation}
because both $g_j$ and $u_j$ have unit norm. Here $\mathcal{M}$ contains only real next-frame patches from samples with a valid next frame and at least one latent slot.

\paragraph{Local expansion.}
Let $J$ be the Jacobian of the normalized Q-Former prediction $g(\tau)$ with respect to the tail error at $\tau=0$:
\begin{equation}
    g(\tau) = g(0) + J\tau + O(\|\tau\|^2).
    \label{eq:qformer_jacobian}
\end{equation}
Substituting Eq.~\ref{eq:qformer_jacobian} into Eq.~\ref{eq:cosine_squared_distance} yields
\begin{align}
    \ell_{\mathrm{wm}}(\tau) - \ell_{\mathrm{wm}}(0)
    &=
    \frac{1}{|\mathcal{M}|}
    \langle J^\top(g(0)-u), \tau\rangle
    +
    \frac{1}{2|\mathcal{M}|}
    \|J\tau\|_2^2
    +
    O(\|\tau\|^3).
    \label{eq:wm_second_order}
\end{align}
When the serial latent solution is locally optimized for the world-model target, or when we analyze directions orthogonal to the residual gradient $J^\top(g(0)-u)$, the linear term vanishes. The leading term is then a quadratic penalty on the components of $\tau$ that the Q-Former prediction can observe.

\begin{proposition}[Q-Former world-model supervision controls future-predictive tail error]
Let $U$ be a subspace of tail-error directions on which the normalized Q-Former Jacobian is locally observable: for all $\xi\in U$,
\begin{equation}
    \|J\xi\|_2 \geq \sigma_U \|\xi\|_2
\end{equation}
for some $\sigma_U>0$. Assume the residual-gradient term in Eq.~\ref{eq:wm_second_order} is zero or treated as a first-order optimization gradient. Then, for sufficiently small tail error,
\begin{equation}
    \ell_{\mathrm{wm}}(\tau^{(K)}) - \ell_{\mathrm{wm}}(0)
    \geq
    \frac{\sigma_U^2}{2|\mathcal{M}|}
    \|P_U \tau^{(K)}\|^2
    - O(\|\tau^{(K)}\|^3),
    \label{eq:qformer_tail_bound}
\end{equation}
where $P_U$ projects onto the future-predictive subspace of tail errors that change normalized next-frame feature predictions.
\end{proposition}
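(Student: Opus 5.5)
The plan is to start from the local expansion in Eq.~\ref{eq:wm_second_order}, evaluated at $\tau=\tau^{(K)}$. Under the stated assumption the residual-gradient term $\frac{1}{|\mathcal{M}|}\langle J^\top(g(0)-u),\tau^{(K)}\rangle$ is either zero or moved out of the comparison as a separate first-order optimization gradient. What remains is
\begin{equation*}
\ell_{\mathrm{wm}}(\tau^{(K)})-\ell_{\mathrm{wm}}(0)=\frac{1}{2|\mathcal{M}|}\|J\tau^{(K)}\|_2^2+O(\|\tau^{(K)}\|^3).
\end{equation*}
Two things need checking here. First, Eq.~\ref{eq:cosine_squared_distance} really is a squared distance, since patchwise normalization gives $1-\langle g_j,u_j\rangle=\tfrac12\|g_j-u_j\|^2$. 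Second, the remainder is cubic: the Taylor remainder of $g$ is $O(\|\tau\|^2)$, and it multiplies a difference $g(0)-u$ plus $J\tau$. The quadratic-times-$(g(0)-u)$ piece is itself quadratic in $\tau$. So I would either fold it into the treatment of the residual-gradient term, meaning it too vanishes when $g(0)=u$ or is accounted for to first order, or I would state the bound under that same assumption. This is the step I expect to be the main obstacle, and I would handle it by making the assumption explicit: $g(0)-u$ is treated as zero along with the linear term.

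Next I decompose $\tau^{(K)}=P_U\tau^{(K)}+P_{U^\perp}\tau^{(K)}$ using the orthogonal projection onto $U$, and try to show $\|J\tau^{(K)}\|\ge \|JP_U\tau^{(K)}\|$. This does not hold in general, because cross terms $\langle JP_U\tau,JP_{U^\perp}\tau\rangle$ can be negative. The cleanest fix is to define $U$ so that $U^\perp\supseteq\ker J$ splits compatibly. Concretely, take $U=(\ker J)^\perp$, the row space of $J$. Then $J P_{U^\perp}=0$ and $J\tau=JP_U\tau$ exactly. With this choice the observability constant $\sigma_U$ is the smallest nonzero singular value of $J$, and this also matches the statement's description of $P_U$ as projecting onto the directions that change normalized next-frame predictions. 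If a general $U$ must be allowed, I would instead require that $J(U)\perp J(U^\perp)$, which makes the same identity hold.

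Finally, apply the observability hypothesis to $\xi=P_U\tau^{(K)}\in U$, which gives $\|J\tau^{(K)}\|^2=\|JP_U\tau^{(K)}\|^2\ge\sigma_U^2\|P_U\tau^{(K)}\|^2$. Substituting this into the displayed identity yields Eq.~\ref{eq:qformer_tail_bound}. Writing the error term as $-O(\|\tau^{(K)}\|^3)$ only weakens the inequality. The phrase "sufficiently small tail error" is used to keep $\tau^{(K)}$ inside the neighborhood where the $C^2$ Taylor bounds on $P_\phi$ and on normalization hold; normalization is smooth as long as $\hat v_j\neq 0$. To justify that the tail error is small in the first place, I would point to Appendix~\ref{app:aplr_proof}, where $\tau^{(K)}\approx(A^K\delta^{(0)})_{K+1:N}$.
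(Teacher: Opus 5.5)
Your proof is correct and follows the same skeleton as the paper's: drop the residual-gradient term in Eq.~\ref{eq:wm_second_order}, split $\tau^{(K)}$ with $P_U$, and apply observability on $U$. At the decisive step, however, your version is the valid one and the paper's is not.

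The paper passes directly from the decomposition to $\|J\tau^{(K)}\|_2\ge\sigma_U\|P_U\tau^{(K)}\|$ ``by definition of $U$.'' That step fails for a general observable $U$, precisely because of the cross term you flag. Take $J=(1\;\,1)$, $U=\mathrm{span}(e_1)$ and $\sigma_U=1$. The vector $\tau=(1,-1)$ has $J\tau=0$ but $\|P_U\tau\|=1$. The normalized one-patch map $g(\tau)=(\cos(\tau_1+\tau_2),\sin(\tau_1+\tau_2))$ with $u=g(0)$ turns this into a counterexample to the proposition itself along $\tau=\epsilon(1,-1)$. The paper's phrase ``locally unobserved component'' tacitly assumes $(I-P_U)\tau^{(K)}\in\ker J$. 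That is exactly your choice $U=(\ker J)^\perp$, with $\sigma_U$ the smallest nonzero singular value of $J$, and your condition $J(U)\perp J(U^\perp)$ is the right generalization. So the paper's argument claims a generality in $U$ it cannot support, whereas yours proves a true statement under a hypothesis the paper leaves implicit.

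Your second concern is also well founded. Expanding $\frac{1}{2|\mathcal{M}|}\|g(\tau)-u\|^2$ produces an extra quadratic term $\frac{1}{2|\mathcal{M}|}\langle g(0)-u,\,D^2g(0)[\tau,\tau]\rangle$ that Eq.~\ref{eq:wm_second_order} omits. The paper simply takes that expansion as given, in effect a Gauss--Newton truncation. This term does not vanish merely because $J^\top(g(0)-u)=0$. For a single patch with $g(\tau)=(\cos\tau,\sin\tau)$ and $u=-g(0)$, the residual gradient is zero and $U=(\ker J)^\perp=\mathbb{R}$, yet $\ell_{\mathrm{wm}}(\tau)-\ell_{\mathrm{wm}}(0)=\cos\tau-1\approx-\tfrac12\tau^2$. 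So your explicit assumption $g(0)=u$, or treating the curvature term together with the residual gradient, is genuinely needed rather than a formality.

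Finally, your appeal to Appendix~\ref{app:aplr_proof} is unnecessary. Smallness of $\tau^{(K)}$ is a hypothesis of the proposition, and that linearization presupposes small errors rather than establishing them.
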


\begin{proof}
Using Eq.~\ref{eq:wm_second_order} and dropping the zero residual-gradient term gives
\begin{equation}
    \ell_{\mathrm{wm}}(\tau^{(K)}) - \ell_{\mathrm{wm}}(0)
    \geq
    \frac{1}{2|\mathcal{M}|}\|J\tau^{(K)}\|_2^2
    - O(\|\tau^{(K)}\|^3).
\end{equation}
Decompose the tail error into a future-predictive component and a locally unobserved component:
\begin{equation}
    \tau^{(K)} = P_U\tau^{(K)} + (I-P_U)\tau^{(K)}.
\end{equation}
By definition of $U$, the Q-Former prediction is informative on $P_U\tau^{(K)}$, with minimum singular value at least $\sigma_U$. Therefore
\begin{equation}
    \|J\tau^{(K)}\|_2
    \geq
    \sigma_U \|P_U\tau^{(K)}\|.
\end{equation}
Substituting this inequality into the previous display yields Eq.~\ref{eq:qformer_tail_bound}.
\end{proof}

\paragraph{MSE variant.}
If the implementation uses the MSE world-model loss instead of cosine distance, the same argument applies without patchwise normalization. Let $\hat{V}(\tau)=\hat{V}(0)+B\tau+O(\|\tau\|^2)$. The masked MSE loss has local expansion
\begin{equation}
    \ell_{\mathrm{mse}}(\tau)-\ell_{\mathrm{mse}}(0)
    =
    \frac{1}{|\mathcal{M}|}
    \langle B^\top(\hat{V}(0)-V^\star),\tau\rangle
    +
    \frac{1}{|\mathcal{M}|}\|B\tau\|_2^2
    +
    O(\|\tau\|^3).
\end{equation}
Thus any tail-error direction that changes predicted next-frame features is quadratically penalized once the first-order residual term is optimized.

\paragraph{Interpretation.}
The proposition says exactly which unrefined latent-token errors the Q-Former world-model objective can compensate: errors in the tail slots $z_{K+1:N}^{(K)}$ that are visible to next-frame feature prediction, i.e., errors outside the local nullspace of the Q-Former prediction Jacobian. These directions correspond to latent information that changes the predicted semantic layout of the next screenshot, such as whether a tap opens a menu, whether a typed query changes a text field, or whether navigation moves to a new screen. When the combined loss
\begin{equation}
    \mathcal{L}
    =
    \lambda_{\mathrm{ce}}\mathcal{L}_{\mathrm{ce}}
    +
    \lambda_{\mathrm{wm}}\mathcal{L}_{\mathrm{wm}}
\end{equation}
is optimized, any such future-predictive tail error increases the objective by at least
\begin{equation}
    \frac{\lambda_{\mathrm{wm}}\sigma_U^2}{2|\mathcal{M}|}
    \|P_U\tau^{(K)}\|^2
\end{equation}
up to higher-order terms. Thus the auxiliary world-model loss supplies curvature and gradients for tail latent states that may be weakly constrained by action imitation alone. It does not control tail-error directions that do not affect next-frame visual feature predictions; those directions are either irrelevant to the chosen environment-model target or must be constrained by the action loss and other regularizers.

\section{System Prompt}
\label{app:system_prompt}

Figure~\ref{fig:system_prompt} shows the complete system prompt provided to the
\method{} agent at inference time. The prompt defines the role, input format, output
format with structured \texttt{<THOUGHT>} / \texttt{<ACTION\_DESC>} / \texttt{<ACTION>}
tags, and the full command vocabulary. At inference time the
\texttt{<THOUGHT>} block is replaced by the $N$ latent slots described in
Section~\ref{sec:aplr}; the explicit text template here is therefore used only during
Stage~1 warmup training.

\begin{figure}[htbp]
\centering
\begin{tcolorbox}[
  breakable,
  enhanced,
  colback=gray!6,
  colframe=black!55,
  arc=4pt,
  boxrule=0.7pt,
  left=6pt, right=6pt, top=5pt, bottom=5pt,
  fontupper=\ttfamily\scriptsize,
  title={\small\sffamily\bfseries System Prompt (inference-time template)},
  attach boxed title to top left={yshift=-2mm, xshift=6pt},
  boxed title style={colback=black!55, arc=3pt, boxrule=0pt,
                     fontupper=\color{white}\small\sffamily\bfseries}
]
\begin{lstlisting}[style=systemprompt]
You are an Android mobile GUI agent.
Input: the user task, a summary list of completed past actions, and the current screenshot.
Choose the single best next action from the current screen only. Do not invent unseen UI.
Produce one atomic step. Coordinates are normalized integers in [0,999]. Do not emit
markdown fences or hidden reasoning tags such as <THOUGHT>.

Output:
- Emit only the required blocks using the literal tags `<THOUGHT>`, `<ACTION_DESC>`,
  and `<ACTION>`.
- `<THOUGHT>`: one English line with exactly
  `1. observation: ... | 2. rationale: ... | 3. predict: ...`.
  - `observation`: describe only the currently visible UI on the current screenshot.
  - `rationale`: combine task goal, prior progress, and the reason this single next
    action is best.
  - `predict`: describe the immediate visible result right after the action.
  Keep it grounded in the current screenshot and history. Do not mention coordinates.
  Do not mention prompt structure, annotation metadata, or screenshot-availability
  phrases such as "next screenshot", "no next screenshot", "one image", or "two images".
- `<ACTION_DESC>`: English, one imperative phrase.
- `<ACTION>`: exactly one executable operation command string.

If the task is completed successfully, use `finish(success)`. If the task should
terminate unsuccessfully, use `finish(fail)`.

Command format:
- Tap a visible UI element:
    click(<|point_start|>(x,y)<|point_end|>)
- Swipe to move the page or navigate across the screen:
    swipe(<|point_start|>(x1,y1)<|point_end|>, <|point_start|>(x2,y2)<|point_end|>)
- Drag to move a slider, handle, or draggable object:
    drag(<|point_start|>(x1,y1)<|point_end|>, <|point_start|>(x2,y2)<|point_end|>)
- Long press to open a context action or select an item:
    long_press(<|point_start|>(x,y)<|point_end|>)
- Type into the currently focused field:
    type("text")
- Type into a specific text box:
    type(<|box_start|>(x1,y1),(x2,y2)<|box_end|>, "text")
- Press a supported device key:
    press(HOME)
- Open an app directly when that is the intended system action:
    open_app("Settings")
- Wait briefly for the UI to load or update:
    wait()  or  wait(t)
- Return a textual answer for a non-GUI question:
    answer("text")
- Request human takeover when the task cannot be safely continued:
    call_user()
- End the task:
    finish(success)  or  finish(fail)

Use exactly one command. Do not wrap it in JSON or natural language. Use English double
quotes for string arguments and uppercase English for key names in press(...).
\end{lstlisting}
\end{tcolorbox}
\caption{Full system prompt used by \method{}. The \texttt{<THOUGHT>} block is a
  visible text template during Stage~1 warmup; it is replaced by learned latent slots
  during Stage~2 and at inference time. Coordinates are normalized integers in
  $[0, 999]$.}
\label{fig:system_prompt}
\end{figure}

\section{Action Space}
\label{app:action_space}

\method{} operates over a compact, typed action space that maps directly to Android
system APIs. Each action is a single atomic command; the agent never emits more than
one command per step. Table~\ref{tab:action_space} lists all supported action types.

\begin{table}[htbp]
\centering
\caption{Complete action space of \method{}. Coordinates $(x,y)$ are normalized
  integers in $[0,999]$ relative to the screenshot dimensions.}
\label{tab:action_space}
\small
\begin{tabular}{@{}lp{5.8cm}p{5.2cm}@{}}
\toprule
\textbf{Action type} & \textbf{Signature} & \textbf{Description} \\
\midrule
\texttt{click}      & \texttt{click((x,y))}
  & Tap a visible UI element at screen coordinate $(x,y)$. \\[3pt]
\texttt{swipe}      & \texttt{swipe((x1,y1),(x2,y2))}
  & Swipe from $(x_1,y_1)$ to $(x_2,y_2)$ to scroll or navigate. \\[3pt]
\texttt{drag}       & \texttt{drag((x1,y1),(x2,y2))}
  & Drag a slider, handle, or draggable element. \\[3pt]
\texttt{long\_press} & \texttt{long\_press((x,y))}
  & Long-press to open a context menu or select an item. \\[3pt]
\texttt{type}       & \texttt{type("text")} \newline
                      \texttt{type((x1,y1),(x2,y2), "text")}
  & Type into the focused field, or into a specific text box
    identified by its bounding box. \\[3pt]
\texttt{press}      & \texttt{press(KEY)}
  & Press a hardware key (\texttt{HOME}, \texttt{BACK}, \texttt{ENTER},
    \texttt{VOLUME\_UP}, etc.). \\[3pt]
\texttt{open\_app}  & \texttt{open\_app("AppName")}
  & Launch an installed application by name when that is the
    intended system action. \\[3pt]
\texttt{wait}       & \texttt{wait()} \;\;/\;\; \texttt{wait(t)}
  & Wait for the UI to load or update; optional timeout $t$
    in seconds. \\[3pt]
\texttt{answer}     & \texttt{answer("text")}
  & Return a textual answer for a non-GUI information-retrieval
    question without touching the screen. \\[3pt]
\texttt{call\_user} & \texttt{call\_user()}
  & Request human takeover when the task cannot be safely
    continued (e.g.\ requires biometric authentication). \\[3pt]
\texttt{finish}     & \texttt{finish(success)} \;\;/\;\; \texttt{finish(fail)}
  & Terminate the episode with a success or failure signal. \\
\bottomrule
\end{tabular}
\end{table}

\paragraph{Coordinate convention.}
All spatial coordinates are normalized to $[0, 999]$ independently in width and height,
mapping the top-left corner of the screenshot to $(0,0)$ and the bottom-right corner
to $(999, 999)$. This convention is device-resolution-agnostic: the same action is
valid on any screen size and is converted to absolute pixel coordinates at execution
time by the environment driver.

\paragraph{Action tokenization.}
Each action command is serialized as a plain ASCII string (e.g.\
\texttt{click((512,340))}) and appended to the VLM's output token sequence. The model
is trained with next-token cross-entropy over the action tokens
$\mathcal{L}_{\mathrm{ce}}$ (Eq.~\ref{eq:action_ce}). No structured JSON, markup, or
special delimiters are used; the action string is the only content inside the
\texttt{<ACTION>} block.

\end{document}